\newtheorem{theorem}{Theorem}
\newtheorem{corollary}[theorem]{Corollary}
\newtheoremstyle{TheoremRep}
        {\topsep}{\topsep}              
        {\itshape}                      
        {}                              
        {\bfseries}                     
        {.}                             
        { }                             
        {\thmname{#1}\thmnote{ \bfseries #3}}
\theoremstyle{TheoremRep}
\newtheorem{theoremrep}{Theorem}
\newcommand{\STAB}[1]{\begin{tabular}{@{}c@{}}#1\end{tabular}}
\title{How Robust are Randomized Smoothing based Defenses to Data Poisoning?}
\author{Akshay Mehra\textsuperscript{1}, Bhavya Kailkhura\textsuperscript{2}, Pin-Yu Chen\textsuperscript{3} and Jihun Hamm\textsuperscript{1}\\
{\small \textsuperscript{1}Tulane University \quad \textsuperscript{2}Lawrence Livermore National Laboratory \quad \textsuperscript{3} IBM Research}\\ 
{\tt\small\{amehra, jhamm3\}@tulane.edu, kailkhura1@llnl.gov, pin-yu.chen@ibm.com}\\
}
\author{Akshay Mehra\thanks{Tulane University}\\
{\tt\small amehra@tulane.edu}
\and

Bhavya Kailkhura\thanks{Lawrence Livermore National Laboratory}\\
{\tt\small kailkhura1@llnl.gov}

\and

Pin-Yu Chen\thanks{IBM Research}\\
{\tt\small pin-yu.chen@ibm.com}

\and

Jihun Hamm\footnotemark[1]\\
{\tt\small jhamm3@tulane.edu}

}
\begin{document}
\maketitle
\begin{abstract}
Predictions of certifiably robust classifiers remain constant in a neighborhood of a point, making them resilient to test-time attacks with a guarantee. 
In this work, we present a previously unrecognized threat to robust machine learning models that highlights the importance of training-data quality in achieving high certified adversarial robustness.
Specifically, we propose a novel bilevel optimization based data poisoning attack that degrades the robustness guarantees of certifiably robust classifiers.
Unlike other poisoning attacks that reduce the accuracy of the poisoned models on a small set of target points, our attack reduces the average certified radius (ACR) of an entire target class in the dataset. 
Moreover, our attack is effective even when the victim trains the models from scratch using state-of-the-art robust training methods such as Gaussian data augmentation\cite{cohen2019certified}, MACER\cite{zhai2020macer}, and SmoothAdv\cite{salman2019provably} that achieve high certified adversarial robustness.
To make the attack harder to detect, we use clean-label poisoning points with imperceptible distortions. 
The effectiveness of the proposed method is evaluated by poisoning MNIST and CIFAR10 datasets and training deep neural networks using previously mentioned training methods and certifying the robustness with randomized smoothing. 
The ACR of the target class, for models trained on generated poison data, can be reduced by more than 30\%. Moreover, the poisoned data is transferable to models trained with different training methods and models with different architectures.

\end{abstract}

\vspace{-0.4cm}
\section{Introduction}
Data poisoning \cite{biggio2012poisoning,jagielski2018manipulating,shafahi2018poison,steinhardt2017certified,zhu2019transferable} 
is a training-time attack where the attacker is assumed to have access to the training data on which the victim will train the model.
The attacker can modify the training data 
in a manner that the model trained on this poisoned data performs as the attacker desires. 
The data hungry nature of modern machine learning methods make them vulnerable to poisoning attacks. Attackers can place the poisoned data online and wait for it to be scraped by victims trying to increase the size for their training sets. 
Another easy target for data poisoning is data collection by crowd sourcing where malicious users can corrupt the data they contribute. In most cases, an attacker can modify only certain parts of the training data such as change the features or labels for a specific class or modify a small subset of the data from all classes.
In this work, we assume the attacker wants to affect the performance of the victim's models on a target class and modifies only the features of the points belonging to that class (without affecting the labels). To evade detection, the attacker is constrained to only add imperceptibly small perturbations to the points of the target class. 
Many previous works \cite{munoz2017towards,shafahi2018poison,huang2020metapoison,koh2017understanding,zhu2019transferable,chen2017targeted,ji2017backdoor,turner2018clean} have shown the effectiveness of poisoning in affecting the accuracy of models trained on poisoned data compared to the accuracy achievable by training with clean data. In most works, the victim is assumed to use standard training by minimizing the empirical loss on the poisoned data to train the models and thus the attack is optimized to hurt the accuracy of standard training. 
However, recent research on test-time evasion attacks \cite{carlini2017adversarial,athalye2018obfuscated,uesato2018adversarial,bulusu2020anomalous} suggests that models trained with standard training are not robust to adversarial examples, making the assumption of victim relying on standard training to train the models for deployment questionable. 

\begin{figure*}[tb]
  \centering{\includegraphics[width=0.99\textwidth]{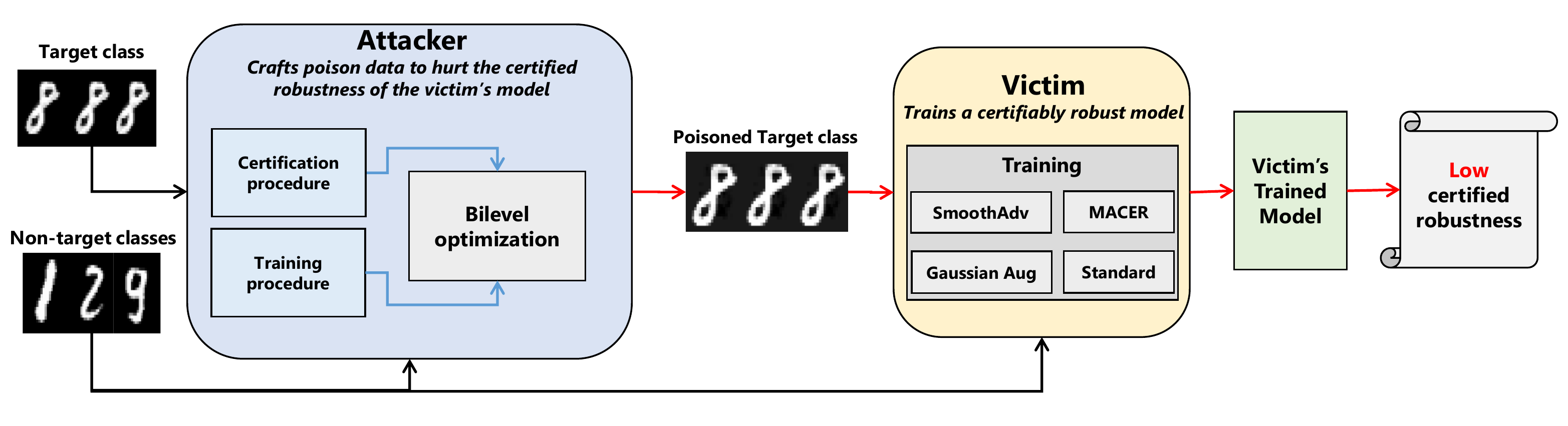}}
  \caption{Overview of our poisoning against certified defenses (PACD) attack which generates poisoned data to reduce the certified robustness of the victim's model trained with methods such as Gaussian data augmentation(GA)\cite{cohen2019certified}, SmoothAdv\cite{salman2019provably} and MACER\cite{zhai2020macer} on a target class.
  }
  \label{fig:overview}
\end{figure*}

Thus, in a realistic scenario, where the aim of the victim is to deploy the model, it's better to assume that the victim will rely on training procedures that yield classifiers which are provably robust to test-time attacks.
Several recent works have proposed methods for training certifiably robust models whose predictions are guaranteed to be constant in a neighbourhood of a point. However, many of these methods \cite{raghunathan2018semidefinite,gowal2018effectiveness,huang2019achieving,xu2020automatic} do not scale to deep neural networks or large datasets, due to their high complexity. Moreover, the effect of training data quality on the performance of these certified defenses at test time remains largely unexplored.
Recently, randomized smoothing (RS) based certification methods \cite{lecuyer2019certified,li2019certified,cohen2019certified} were shown to be scalable to deep neural networks and high dimensional datasets enabling researchers to propose training procedures \cite{salman2019provably,zhai2020macer} that lead to models with high certified robustness. 
Thus, we assume that a victim will rely on RS based certification methods to measure the certified robustness and use RS based training procedures to train the models.
The fact that a victim can train with a method that improves certified adversarial robustness is an immediate challenge for current poisoning attacks which optimize the poison data to affect the accuracy of models trained with standard training. 
Table~\ref{Table:difficulty_of_poisoning} shows that poisons optimized against standard training can significantly reduce the accuracy of the victim's model (left to right) when the victim also uses standard training (1st and 5th row). However, this poison data is rendered ineffective when the victim uses a certifiably robust training method such as \cite{cohen2019certified,salman2019provably,zhai2020macer}.

\emph{Are certified defenses robust to data poisoning?} 
We study this question and demonstrate that data poisoning is a serious concern even for certified defenses.  
\begin{table}
  \caption{Failure of traditional data poisoning attacks optimized against standard training in affecting the test accuracy (of target class) of models trained with certifiably robust training procedures. Details of the experiment are present in Appendix~\ref{app:standard_acc}. Certifiably robust training methods \cite{cohen2019certified,salman2019provably,zhai2020macer} are trained with $\sigma$ = 0.25 and accuracy of their base classifiers are reported.
  }
  \label{Table:difficulty_of_poisoning}
  \centering
  \small
  \resizebox{0.9\columnwidth}{!}{
    \begin{tabular}{c|c|c|c}
    \toprule
    &\multirow{1}{*}{\makecell{Training method }} & \multicolumn{1}{c|}{\makecell{Model trained \\ on clean data}} & \multicolumn{1}{c}{\makecell{Model trained \\ on  poison data}} \\
    \midrule
    \multirow{4}{*}{\STAB{\rotatebox[origin=c]{90}{MNIST}}} &
    Standard & 	99.28$\pm$0.01 & 60.08$\pm$12.6 \\
    &\makecell{GA\cite{cohen2019certified}} & 	98.99$\pm$0.14  & 98.31$\pm$1.65\\
    &SmoothAdv \cite{salman2019provably}&  99.18$\pm$0.23  & 99.31$\pm$0.29\\
    &MACER \cite{zhai2020macer}&  99.21$\pm$0.56  & 98.31$\pm$0.58\\
    \midrule
    \midrule
    \multirow{4}{*}{\STAB{\rotatebox[origin=c]{90}{CIFAR10}}} &
    Standard &  92.71$\pm$1.31  & 0.36$\pm$0.37 \\
    &\makecell{GA \cite{cohen2019certified}} &  88.84$\pm$2.39  & 88.38$\pm$2.13\\
    &SmoothAdv \cite{salman2019provably} & 79.48$\pm$2.69   &	74.95$\pm$3.45\\
    &MACER \cite{zhai2020macer}& 	87.12$\pm$1.17  &	88.54$\pm$4.52\\
    \bottomrule
    \end{tabular}
    }
\end{table}
We propose a novel data poisoning attack that can significantly compromise the certified robustness guarantees achievable from training with robust training procedures. We formulate the Poisoning Against Certified Defenses (PACD) attack as a constrained bilevel optimization problem and theoretically analyze its solution for the case when the victim uses linear classifiers. Our theoretical analysis and empirical results suggests that the decision boundary of the smoothed classifiers (used for RS) learned from 
the poisoned data is significantly different from the one learned from clean data there by causing a reduction in certified radius. 
Our bilevel optimization based attack formulation is general since it can generate poisoned data against a model trained with any certifiably robust training method (lower-level problem) and certified with any certification procedure (upper-level problem). Fig.~\ref{fig:overview} shows the overview of the proposed PACD attack.

Unlike previous poisoning attacks that aim to reduce the accuracy of the models on a small subset of data, our attack can reduce the certified radius of an entire target class. 
The poison points generated by our attack have clean labels and imperceptible distortion making them difficult to detect. 
The poison data remains effective when the victim trains the models from scratch or uses data augmentation or weight regularization during training. Moreover, the attack points generated against a certified defense are transferable to models trained with other RS based certified defenses and to models with different architectures. 
This highlights the importance of training-data quality and curation for obtaining meaningful gains from certified defenses at test time, a factor not considered by current certified defense research.

Our main contributions are as follows
\begin{itemize}
    \item We study the problem of using data poisoning attacks to affect the robustness guarantees of classifiers trained using certified defense methods. To the best of our knowledge, this is the first clean label poisoning attack that significantly reduces the certified robustness guarantees of the models trained on the poisoned dataset. 
    
    \item We propose a bilevel optimization based attack which can generate poison data against several robust training and certification methods. We specifically use the attack to highlight the vulnerability of randomized smoothing based certified defenses to data poisoning.
    
    \item We demonstrate the effectiveness of our attack in reducing the certifiable robustness obtained using randomized smoothing on models trained with state-of-the-art certified defenses \cite{cohen2019certified,salman2019provably,zhai2020macer}. Our attack reduces the ACR of the target class by more than 30\%. 
\end{itemize}

\vspace{-0.3cm}
\section{Background and related work}
{\bf Randomized smoothing:}
The RS procedure \cite{cohen2019certified} uses a smoothed version of the original classifier $f:\mathbb{R}^d \xrightarrow{} \mathcal{Y}$ and certifies the adversarial robustness of the new classifier. The smoothed classifier, $g(x) = \arg \max_c \mathbb{P}_{\eta \sim \mathcal{N}(0, \sigma^2I)}(f(x+\eta)=c)$, assigns $x$ the class whose decision region $\{x' \in \mathbb{R}^d: f(x') = c\}$ has the largest measure under the distribution $\mathcal{N}(x, \sigma^2I)$, where $\sigma$ is used for smoothing. Suppose that while classifying a point $\mathcal{N}(x, \sigma^2I)$, the original classifier $f$ returns the class $c_A$ with probability $p_A = \mathbb{P}(f(x + \eta) = c_A)$, and the “runner-up” class $c_B$ is returned with probability $p_B = \max_{c \neq c_A} \mathbb{P}(f(x + \eta) = c)$, then the prediction of the point $x$ under the smoothed classifier $g$ is robust within the radius $r(g;\sigma) = \frac{\sigma}{2}(\Phi^{-1}(p_A) - \Phi^{-1}(p_B)),$ where $\Phi^{-1}$ is the inverse CDF of the standard Normal distribution. In practice, Monte Carlo sampling is used to estimate a lower bound on $p_A$ and an upper bound on $p_B$ as its difficult to estimate the actual values for $p_A$ and $p_B$.
Since standard training of the base classifier does not achieve high robustness guarantees, \cite{cohen2019certified} proposed to use GA based training in which the base classifier is trained on Gaussian noise corruptions of the clean data. Recent works \cite{zhai2020macer,salman2019provably} showed that the certified robustness guarantees of RS can be boosted by using different training procedures. In particular, \cite{salman2019provably} proposed to train the base classifier using adversarial training where the adversarial examples are generated against the smoothed classifier. 
Although effective at increasing the certified radius, the method can be slow to train due to the requirement of generating adversarial examples against the smoothed classifier at every step. Another recent work \cite{zhai2020macer} proposed a different training procedure which is significantly faster to train and relies on directly maximizing the certified radius for achieving high robustness guarantees. Due to their effectiveness in improving the certified robustness guarantees of machine learning models, we craft poison data against these methods. A recent attack method \cite{ghiasi2020breaking} showed that it is possible to fool a robust classifier to mislabel an input and give an incorrect certificate using perturbation large in $\ell_p$ norm at test-time. 
Our work is different since we focus on train-time attacks against certified defenses using imperceptibly small perturbations to the poison data.

{\bf Bilevel optimization:}
A bilevel optimization problem has the form $\min_{u \in \mathcal{U}} \xi(u,v^*)\;\mathrm{s.t.}\;v^* = \arg\min_{v\in \mathcal{V}(u)}\;\zeta(u,v)$, where the upper-level problem is a minimization problem with $v$ constrained to be the optimal solution to the lower-level problem (see \cite{bard2013practical}). Our data poisoning attack is a constrained bilevel optimization problem. Although general bilevel problems are difficult to solve, under some simplifying assumptions their solution can be obtained using gradient based methods. Several methods for solving bilevel problems in machine learning have been proposed previously \cite{domke2012generic,pedregosa2016hyperparameter,franceschi2017forward,maclaurin2015gradient,shaban2018truncated,mehra2019penalty} (See Appendix~\ref{app:approxgrad} for an overview). We use the method based on approximating the hypergradient by approximately solving a linear system (ApproxGrad Alg.~\ref{alg:approxgrad} in Appendix~\ref{app:approxgrad}) in this work. 
Previous works \cite{mei2015using,munoz2017towards,mehra2019penalty,huang2020metapoison,carnerero2020regularisation} have shown the effectiveness of solving bilevel optimization problem for data poisoning to affect the accuracy of models trained with standard training. Our work on the other hand proposes a bilevel optimization based formulation to generate a data poisoning attack against RS based certified defenses and shows its effectiveness against state-of-the-art robust training methods.

\section{Poisoning against certified defenses}
Here we present the bilevel formulation of our PACD attack for generating poisoned data to compromise the certified robustness guarantees of the models trained using certified defenses. Specifically, we discuss how to generate poison data against GA \cite{cohen2019certified}, SmoothAdv \cite{salman2019provably} and MACER \cite{zhai2020macer} and affect the certified robustness guarantees obtained using RS.

\subsection{General attack formulation}
Let $\mathcal{D^\mathrm{clean}} = \{(x_i^\mathrm{clean}, y_i^\mathrm{clean})\}_{i=1}^{N_{\mathrm{clean}}}$ 
be the clean, unalterable portion of the training set. Let $u=\{u_1,...,u_n\}$ denote the attacker's poisoning data which is added to the clean data. 
For clean-label attack, we require that each poison example $u_i$ has a limited perturbation, for example, $\|u_i - x_i^\mathrm{base}\| =\|\delta_i\| \leq \epsilon$ from the base data $x_i^\mathrm{base}$ and has the same label $y_i^\mathrm{base}$, for $i=1,...,n$. Thus $\mathcal{D^\mathrm{poison}} = \{(u_i, y_i^\mathrm{base})\}_{i=1}^{N_{\mathrm{poison}}}$.
The goal of the attacker is to find $u$ such that when the victim uses $\mathcal{D^\mathrm{clean}} \bigcup \mathcal{D^\mathrm{poison}}$ to train a classifier, 
the certified robustness guarantees of the model on the target class ($\mathcal{D^\mathrm{val}} = \{(x_i^\mathrm{val},y_i^\mathrm{val})\}_{i=1}^{N_\mathrm{val}}$) are significantly diminished compared to a classifier trained on clean data. 
The attack can be formulated as follows:
\begin{equation}
    \begin{split}
        &\min_{u \in \mathcal{U}}\;\; \mathcal{R}(\mathcal{D}^\mathrm{val}; \theta^\ast) \\
        \mathrm{s.t.}\;\; \theta^* = \arg&\min_{\theta}\; \mathcal{L}_\mathrm{robust}(\mathcal{D^\mathrm{clean}} \bigcup \mathcal{D^\mathrm{poison}}; \theta).
    \end{split}
    \label{eq:bilevel_simple}
\end{equation}
The upper-level cost $\mathcal{R}$ denotes a certified robustness metric such as the certified radius from RS. 
The goal of the upper-level problem is to compromise the certified robustness guarantees of the model trained on validation data $\mathcal{D}^\mathrm{val}$.
The solution to the lower-level problem $\theta^\ast$ are the parameters of the machine learning model learnt from $\mathcal{D^\mathrm{clean}} \bigcup \mathcal{D^\mathrm{poison}}$ using a robust training method with loss function $\mathcal{L}_\mathrm{robust}$. 
The fact that any training method that achieves high certified robustness to test-time attacks and any certification procedure can be incorporated into this formulation by changing the lower- and upper-level problems, respectively, makes the attack formulation broadly applicable. 
Recent works \cite{cohen2019certified,salman2019provably,zhai2020macer} have shown RS based methods to be effective at certifying and producing robust classifiers.
The scalability of these methods to large datasets and deep models make them useful for real-world application. 
Thus, we focus on using our poisoning attack against these methods. 

\subsection{Poison randomized smoothing based defenses}
For an input at test time, RS produces a prediction from the smoothed classifier $g$ and a radius in which this prediction remains constant. 
Since the certified radius of a ``hard'' smooth classifier $g$ is non-differentiable, it cannot be directly incorporated in the upper-level of the attack formulation Eq.~(\ref{eq:bilevel_simple}). 
To overcome this challenge, we use the ``soft'' smooth classifier $\tilde{g}$ as an approximation. Similar technique has been used in \cite{salman2019provably, zhai2020macer}. 
Let $z_{\theta}:X\xrightarrow{}\mathcal{P}(K)$ be a classifier whose last layer is softmax with parameters $\theta$ and $\sigma > 0$ is the noise used for smoothing, then soft smoothed classifier $\Tilde{g}_{\theta}$ of $z_{\theta}$ is 
\(\Tilde{g}_{\theta}(x) = \arg \max_{c \in Y} \mathbb{E}_{\eta \sim \mathcal{N}(0,\sigma^2I)}[z^c_{\theta}(x + \eta)].\)
It was shown in \cite{zhai2020macer} that if the ground truth of an input $x$ is $y$ and $\Tilde{g}_{\theta}$ classifies $x$ correctly
then $\tilde{g}_{\theta}$ is provably robust at $x$, with the certified radius $\tilde{r}(\tilde{g}_{\theta}; x, y, \sigma) = \frac{\sigma}{2}[\Phi^{-1}(\mathbb{E}_{\eta}[z^y_{\theta}(x + \eta)]) - \Phi^{-1}(\max_{y' \neq y}\mathbb{E}_{\eta}[z^{y'}_{\theta}(x + \eta)])].$
Assuming $\tilde{r}(\tilde{g}_{\theta}; x, y, \sigma) = 0$ when $x$ is misclassified, the ACR is $\tilde{R}(\tilde{g}_{\theta}; \mathcal{D}, \sigma) = \frac{1}{|\mathcal{D}|}\sum_{(x,y) \in \mathcal{D}} \tilde{r}(\tilde{g}_{\theta}; x, y, \sigma).$
Since $\tilde{R}$ is differentiable we can use it in the upper-level of Eq.~(\ref{eq:bilevel_simple}). The lower-level problem can be any robust training procedure and we focus on using \cite{cohen2019certified,zhai2020macer,salman2019provably} in this work.

{\bf Poisoning against GA \cite{cohen2019certified}.} 
We start by showing how to generate poison data against models trained with GA \cite{cohen2019certified}, which was shown to yield higher certified robustness compared to models trained with standard training. In this method the classifier $f_{\theta}$ is obtained by optimizing the loss function $\mathcal{L}_\mathrm{GaussAug}(\mathcal{D};\theta,\sigma) = \frac{1}{|\mathcal{D}|}\sum_{(x_i,y_i) \in \mathcal{D}} l_{ce}(x_i+\eta,y_i;\theta)$, where $l_{ce}$ is the cross entropy loss and $\eta\sim \mathcal{N}(0,\sigma^2 I)$. To control the perturbation added to the poison data we used $\ell_\infty$-norm here but other norms can also be used. The bilevel formulation to generate poison data to reduce the certified robustness guarantees obtained using RS for a classifier trained with GA is as follows.
\begin{equation}
    \begin{split}
        &\min_{u}\;\; \tilde{R}(\tilde{g}_{\theta^*}; \mathcal{D}^\mathrm{val}, \sigma) \\
        &\mathrm{s.t.}\;\; \|\delta_i\|_{\infty} \leq \epsilon,\;\;  i=1,...,n,\;\;\mathrm{and} \\ 
        \theta^* = \arg&\min_{\theta}\; \mathcal{L}_\mathrm{GaussAug}(\mathcal{D^\mathrm{clean}} \bigcup \mathcal{D^\mathrm{poison}}; \theta,\sigma).
    \end{split}
    \label{Eq:Bilevel}
\end{equation}

\begin{figure*}[tb]
  \centering{\includegraphics[width=0.99\textwidth]{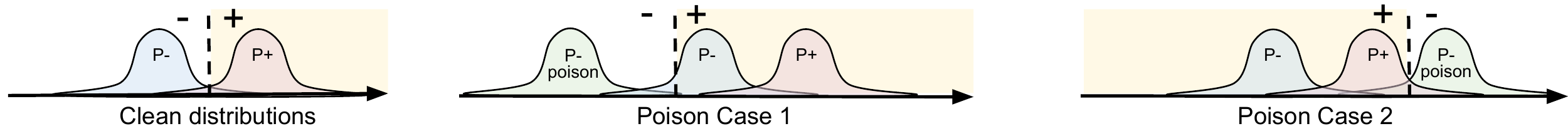}}
  \caption{Analytical solutions of problem (\ref{eq:bilevel_linear}) with linear classifiers. The poison distribution $(P^{-}_\mathrm{poison})$ can change the decision boundary (broken line) and reduce the ACR of the clean distribution $(P^{-})$ in two ways (Cases 1 and 2). Perturbation is exaggerated for illustration.}
  \label{fig:linear}
\end{figure*} 

\begin{algorithm}
\caption{Poisoning GA based certified defense \cite{cohen2019certified}
} 
\label{alg:main}
\textbf{Input}: $\mathcal{D}^\mathrm{clean}, \mathcal{D}^\mathrm{base}, \mathcal{D}^\mathrm{val}, \mathrm{perturbation \;strength}\; \epsilon, \\ \mathrm{noise \;level}\; \sigma, \mathrm{number\; of\; noise\; samples}\; k, \\ \mathrm{inverse\; temperature}\;\alpha, \mathrm{total\; epochs}\; P,\\  \mathrm{lower-level\; epochs}\; T_1, \mathrm{epochs\; for \;linear\; system}\;T_2$ \\
\textbf{Output}: $\mathcal{D}^\mathrm{poison}$
\begin{algorithmic}
\STATE{$\mathcal{D}^\mathrm{poison} := \mathcal{D}^\mathrm{base}$}
\FOR{$p=0,\;\cdots\;,P\textrm{-}1$}
    \STATE{Sample a mini-batch $(x^\mathrm{clean}, y^\mathrm{clean}) \;\sim\; \mathcal{D}^\mathrm{clean}$}
    \STATE{Sample a mini-batch of $n$ points $(x^\mathrm{val}, y^\mathrm{val}) \;\sim\; \mathcal{D}^\mathrm{val}$}
    \STATE{Sample a mini-batch $(x^\mathrm{poison}, y^\mathrm{poison}) \;\sim\;\mathcal{D}^\mathrm{poison}$}
    \STATE{Pick the base samples for poison data $(x^\mathrm{base}, y^\mathrm{base})$}
    \STATE{For each $x^\mathrm{val}_{i}$, sample $k$ i.i.d. Gaussian samples}
    \STATE{$x^\mathrm{val}_{i_1}, \cdots , x^\mathrm{val}_{i_n} \sim \mathcal{N}(x^\mathrm{val}_{i}, \sigma^2I)$}
    \STATE{Compute $\tilde{z}_{\theta}(x^\mathrm{val}_i) \xleftarrow{} \frac{1}{k} \sum^k_{j=1} \alpha z_{\theta}(x^\mathrm{val}_{i_j})$ for each $i$}
    \STATE{$\mathcal{G_{\theta}} := \{(x^\mathrm{val}_i, y^\mathrm{val}_i): y^\mathrm{val}_i = \arg \max_{c \in \mathcal{Y}}\; \tilde{z}^c_{\theta}(x^\mathrm{val}_i)\}$}
    \STATE{}
    \STATE{For each $(x_i, y_i) \in \mathcal{G_{\theta}}$, compute $\tilde{y}_i$}
    \STATE{$\tilde{y}_i \xleftarrow{} \arg \max_{c \in \mathcal{Y}\textbackslash\{y_i\}} \tilde{z}^c_{\theta}(x_i)$}
    \STATE{For each  $(x_i, y_i) \in \mathcal{G_{\theta}}$, compute $\tilde{r}(x_i, y_i)$}
    \STATE{$\tilde{r}(x_i, y_i) = \frac{\sigma}{2}(\Phi^{-1}(\tilde{z}^{y_i}_{\theta}(x_i)) - \Phi^{-1}(\tilde{z}^{\tilde{y}_i}_{\theta}(x_i)))$}
    \STATE{}
    \STATE{$\xi :=  \frac{1}{n}\sum_{(x_i, y_i) \in \mathcal{G_{\theta}}}\tilde{r}(x_i, y_i)$}
    \STATE{$\zeta :=  \mathcal{L}_\mathrm{GaussAug}((x^\mathrm{clean}, y^\mathrm{clean}) \bigcup (x^\mathrm{poison}, y^\mathrm{poison}), \sigma)$}
    \STATE{$(x^\mathrm{poison}, y^\mathrm{poison}):=$ApproxGrad$(\xi, \zeta, 1, T_1, T_2,\epsilon,x^{base})$}
    \STATE{Update $\mathcal{D}^\mathrm{poison}$ with $(x^\mathrm{poison}, y^\mathrm{poison})$}
\ENDFOR
\end{algorithmic}
\end{algorithm}

{\bf Poisoning against MACER \cite{zhai2020macer}.} Another recent work proposed a method for robust training by maximizing the certified radius (MACER). Their approach uses a loss function which is a combination of the classification loss and the robustness loss of the soft smoothed classifier $\tilde{g}_\theta$. In particular, the loss of the smoothed classifier on a point $(x,y)$ is given by $l_{macer}(\tilde{g}_\theta; x, y) = -\mathrm{log} \;\hat{z}^y_{\theta}(x) + \frac{\lambda \sigma}{2} \mathrm{max}\{\gamma - \tilde{\xi}_{\theta}(x, y), 0\}\cdot\mathbf{1}_{\tilde{g}_{\{\theta}(x)=y\}}$. where $\eta_1, ..., \eta_k$ are $k$ i.i.d. samples from $\mathcal{N}(0, \sigma^2\mathbf{I})$, $\hat{z}^y_{\theta}(x) = \frac{1}{k}\Sigma^k_{j=1} z^y_{\theta}(x + \eta_j)$ is the empirical expectation of $z_{\theta}(x + \eta)$, $\tilde{\xi}_{\theta}(x, y) = \Phi^{-1}(\hat{z}^y_{\theta}(x)) - \Phi^{-1}(\mathrm{max}_{y \neq y} \hat{z}^{y'}_{\theta}(x))$, $\gamma$ is the hinge factor, and $\lambda$ balances the accuracy and robustness trade-off. Using this we can define $\mathcal{L}_\mathrm{macer}(\mathcal{D};\theta,\sigma) = \frac{1}{|\mathcal{D}|}\sum_{(x_i,y_i) \in \mathcal{D}} l_{macer}(\tilde{g}_{\theta};x_i,y_i)$. To generate poison data that reduces the robustness guarantees of classifier trained with MACER we can use the loss $\mathcal{L}_\mathrm{macer}(\mathcal{D};\theta,\sigma)$ in the lower-level problem in Eq.~(\ref{Eq:Bilevel}).

{\bf Poisoning against SmoothAdv \cite{salman2019provably}.}  It was shown that the certified robustness guarantees obtained from RS can be improved by training the classifiers using adversarial training with adversarial examples generated against the smooth classifier. In particular the classifier trained with SmoothAdv optimizes the following objective for a point $(x,y)$.
$\min_{\theta}\max_{\|x' - x\|_2 \leq \alpha} -\mathrm{log} \frac{1}{k}\Sigma^k_{j=1} z^y_{\theta}(x' + \eta_j)$ where $\eta_1, ..., \eta_k$ are $k$ i.i.d. samples from $\mathcal{N}(0, \sigma^2\mathbf{I})$ and $\alpha$ is the permissible $\ell_2$ distortion to $x$. To generate poisoning data against SmoothAdv we must use this objective as the lower-level problem in Eq.~(\ref{Eq:Bilevel}). To make it easier for bilevel solvers to solve this problem we use an approximation to the mini-max problem. For doing that we first compute the adversarial example 
$x' = \arg\max_{\|x' - x\|_2 \leq \alpha} -\mathrm{log} \frac{1}{k}\Sigma^k_{j=1} z^y_{\theta}(x' + \eta_j)$ using PGD attack on the points in $\mathcal{D^\mathrm{clean}} \bigcup \mathcal{D^\mathrm{poison}}$ and then use these examples as our new dataset to train the model parameters in the lower-level as in Eq.~(\ref{Eq:Bilevel}). Specifically, the lower-level problem in Eq.~(\ref{Eq:Bilevel}) becomes $\arg\min_{\theta}\; \mathcal{L}_\mathrm{GaussAug}(\mathcal{D}^\mathrm{clean}_{adv} \bigcup \mathcal{D}^\mathrm{poison}_{adv}; \theta,\sigma)$ where $\mathcal{D}_{adv}$ denotes the adversarial examples generated against $\tilde{g}_{\theta}$. We update $\mathcal{D}_{adv}$ in every step of the bilevel optimization.
\vspace{-0.1cm}
\subsection{Generation and evaluation of poisoning attack}
In this work, we focus on creating a poisoned set to compromise the certified adversarial robustness guarantees of all points in a target class. We initialize the poison data with clean data from the target class (i.e., base data) and optimize the perturbation to be added to each point by solving the bilevel problem in Eq.~(\ref{Eq:Bilevel}) for attack against GA based training. We use a small value of $\epsilon$ to ensure the perturbations added are imperceptible and the poison points have clean labels when inspected visually (See Fig.~\ref{fig:attack_egs} in the Appendix). The bilevel optimization is solved using the ApproxGrad algorithm (Alg.~\ref{alg:approxgrad} in Appendix~\ref{app:approxgrad}). The full attack algorithm for generating poison data against GA \cite{cohen2019certified} is shown in Alg.~\ref{alg:main}. Attack against other methods are generated similarly by replacing the lower-level objective ($\zeta$ in Alg.~\ref{alg:main}) with the appropriate loss function for MACER~\cite{zhai2020macer} and SmoothAdv~\cite{salman2019provably}. We evaluate the effect of poisoning, by training the models from scratch using GA, MACER and SmoothAdv on their respective poisoned sets and report ACR and approximate certified accuracy (points with certified $\ell_2$ radius greater than zero) on the clean test points from the target class. Previous works \cite{huang2020metapoison, shafahi2018poison} had shown the effectiveness of poisoning by lowering the accuracy on specific target points from the test set. Our attack is also effective, under a similar setting, at reducing the certified radius for target points (Appendix~\ref{app:partial_poisoning}).

\vspace{-0.1cm}
\subsection{Analysis of poisoning with linear classifiers}
To gain a deeper insight into the effect of poisoning, we analyze the 
analytical solution of our bilevel problem for the case of linear classifiers trained with GA. 
Suppose we have a one-dimensional two-class problem and the attacker's goal is to poison the distribution of the \emph{negative} class $P^{-}$ so that the ACR ($\tilde{R}$) of the poisoned model on the test points of the \emph{negative} class is reduced. 
Let $\epsilon$ be the the maximum permissible perturbation that can be added by the attacker to the points of the class $P^{-}$. 
We do not assume any specific distributions for $P^{+}$ and $P^{-}$ here, but only that $\sum_i x_i^{-}<\sum_i x_i^{+}$ without loss of generality. Here $x_i^{+}$ and $x_i^{-}$ refer to the training points of the positive and the negative class, respectively.
A linear classifier in one-dimension is either $f(x)=1\;\mathrm{iff}\;x \geq t$ or $f(x)=-1\;\mathrm{iff}\;x \leq t$ parameterized by the threshold $t$. 
For linear classifiers, 
the smoothed classifier $g$ is the same as the unsmoothed classifier $f$ and the certified radius for a point is the distance to the decision boundary \cite{cohen2019certified}. 
To make the problem analytically tractable, we use the squared-loss at the lower-level i.e., $f(x) = wx + b$ and $l(x,y;f)=(f(x) - y)^2$.
The bilevel problem for poisoning is as follows
\begin{equation}
    \begin{split}  
        &\min_{u}\;\; \mathbb{E}_{{P}_{-}}[\max(\mathrm{sign}(w^\ast)(-b^\ast/w^\ast-x),0)]\\
        &\mathrm{s.t.}\;\; -\epsilon \leq u_i - x_i^{-} \leq \epsilon, \;\; \mathrm{for \;\; i = 1, ..., n}\\ 
        w^\ast,&b^\ast = \arg\min_{w, b}\; \frac{1}{2n}\big[\sum_{i=1}^{n}l(x_i^{+},1) + \sum_{i=1}^{n}l(u_i,-1)\big].
    \end{split}
    \label{eq:bilevel_linear}
\end{equation}
\begin{theorem}
\label{thm:linear}
If the perturbation is large enough, i.e., $\epsilon \geq \frac{\sum_i x_i^{+} - \sum_i x_i^{-}}{n}$ then there are two locally optimal solutions to (\ref{eq:bilevel_linear}) which are $u_i = x_i^{-} - \epsilon$ (Case 1) and $u_i = x_i^{-} + \epsilon$ (Case 2) for $i=1,...,n$. 
Otherwise, there is a unique globally optimal solution $u_i = x_i^{-} - \epsilon$ (Case 1) for $i=1,...,n$.
\end{theorem}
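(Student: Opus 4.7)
The plan is to solve the lower-level least-squares problem in closed form, reduce the upper-level objective to a scalar function of $\bar{u} := \tfrac{1}{n}\sum_i u_i$, and then read off the local minimizers by exploiting the monotonicity of the ACR on each side of the sign-flip of $w^*$. Differentiating the lower-level squared loss with respect to $b$ and $w$, the stationarity condition in $b$ gives $b^* = -\tfrac{w^*}{2}(\bar{x}^+ + \bar{u})$ with $\bar{x}^+ := \tfrac{1}{n}\sum_i x_i^+$, so the decision threshold is $t^*(u) = -b^*/w^* = (\bar{x}^+ + \bar{u})/2$, depending on $u$ only through $\bar{u}$. Substituting back into the $w$-stationarity equation yields
\[
w^*\Bigl[\tfrac{1}{n}\sum_i (x_i^+)^2 + \tfrac{1}{n}\sum_i u_i^2 - \tfrac{1}{2}(\bar{x}^+ + \bar{u})^2\Bigr] \;=\; \bar{x}^+ - \bar{u},
\]
and a short expansion shows the bracket equals $\mathrm{Var}(x^+) + \mathrm{Var}(u) + \tfrac{1}{2}(\bar{x}^+ - \bar{u})^2 \geq 0$, strictly positive in the non-degenerate case; hence $\mathrm{sign}(w^*) = \mathrm{sign}(\bar{x}^+ - \bar{u})$.

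Next, I would substitute these into the upper-level ACR. Because $\mathrm{sign}(w^*)$ enters through $\max(\mathrm{sign}(w^*)(t^* - x),\,0)$, the objective splits into two regimes: for $\bar{u} < \bar{x}^+$ it becomes $G(u) = \mathbb{E}_{P^-}[\max(t^*(u) - x,\,0)]$, which is non-decreasing in $\bar{u}$; for $\bar{u} > \bar{x}^+$ it becomes $G(u) = \mathbb{E}_{P^-}[\max(x - t^*(u),\,0)]$, which is non-increasing in $\bar{u}$. In either regime $G$ depends on $u$ only through $\bar{u}$.

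Then I would optimize over the feasible cube $\{u : |u_i - x_i^-| \leq \epsilon\}$. The mean $\bar{u}$ sweeps exactly $[\bar{x}^- - \epsilon,\, \bar{x}^- + \epsilon]$, and each endpoint is attained only by the uniform choice $u_i = x_i^- \mp \epsilon$ for all $i$. In the first regime the minimum over $\bar{u}$ is at its smallest admissible value (Case~1, $u_i = x_i^- - \epsilon$); in the second regime it is at its largest admissible value (Case~2, $u_i = x_i^- + \epsilon$). If $\epsilon < \bar{x}^+ - \bar{x}^-$, the second regime is empty, so Case~1 is the unique global minimizer. If $\epsilon \geq \bar{x}^+ - \bar{x}^-$, both regimes are non-empty; Cases~1 and~2 are each local minimizers because any sufficiently small perturbation of the candidate remains feasible and in the same regime, where $G$ is monotone and cannot decrease.

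The main obstacle will be the sign-flip of $w^*$ at $\bar{u} = \bar{x}^+$, where the ACR is generically discontinuous and the closed-form threshold degenerates. I plan to handle this by working piecewise on the two open regimes and noting that small perturbations around a Case~1 or Case~2 candidate do not cross this flip, so the local-minimum property is preserved without having to reason about the boundary behavior directly.
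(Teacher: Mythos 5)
Your proposal is correct and follows essentially the same route as the paper's proof: solve the lower-level least squares in closed form so that the threshold $t^*=(\bar{x}^+ +\bar{u})/2$ depends on $u$ only through its mean, split into the two regimes by the sign of $w^*$, and use monotonicity of the expected radius in $t^*$ to push $\bar{u}$ to the appropriate endpoint of $[\bar{x}^- -\epsilon,\ \bar{x}^- +\epsilon]$, with the second regime feasible exactly when $\epsilon \geq \bar{x}^+ - \bar{x}^-$. Your explicit variance expansion establishing $\mathrm{sign}(w^*)=\mathrm{sign}(\bar{x}^+ -\bar{u})$ is just a more detailed justification of the paper's assertion that $w \propto \sum_i x_i^+ - \sum_i u_i$, so the two arguments coincide in substance.
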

Thus, optimal poisoning is achieved by shifting points of the $P^{-}$ class either towards left or right by the maximum amount $\epsilon$ (Fig.~\ref{fig:linear} and Appendix~\ref{app:isotropic_gaussian}). Moreover, the effect of poisoning an $\alpha$ fraction of points from the $P^{-}$ class with maximum permissible perturbation $\Tilde{\epsilon}$ is same as that of poisoning all points of $P^{-}$ class with $\epsilon = \alpha \Tilde{\epsilon}$ (Corollary~\ref{cor:partial_poisoning} in Appendix~\ref{app:proofs}).
Although a direct analysis is intractable for non-linear cases, we empirically observed that our attack moved the decision boundary of neural networks closer to the points of the target class as measured by the mean distance of points to the decision boundary of the smoothed classifier (Sec.~\ref{sec:exp_emp_robustness}).

\vspace{-0.1cm}
\section{Experiments}\label{sec:experiments}
In this section we present the results of our PACD\footnote{The code is available at \url{https://github.com/akshaymehra24/poisoning_certified_defenses}} attack on poisoning deep neural networks trained using methods that make the model certifiably robust to test-time attacks. All the results presented here are averaged over models trained with five random initialization. 
We report the average certified radius (ACR) as the average of the certified radius obtained from the RS based certification procedure of \cite{cohen2019certified} for correctly classified points. Certified radius is zero for misclassified and abstained points. 
The approximate certified accuracy (ACA) is the fraction of points correctly classified by the smoothed classifier ($\ell_2$ radius greater than zero). All results are reported over 500 randomly sampled images from the target classes.
We use the same value of $\sigma$ for smoothing during attack, retraining and evaluation. We compare our results to watermarking \cite{shafahi2018poison} 
which has been used previously for clean label attacks (opacity 0.1 followed by clipping to make $\ell_{\infty}$ distortion equal to $\epsilon$),
and show that poison data generated using the bilevel optimization is significantly better at reducing the average certified radius. 

\begin{table}[tb]
  \caption{Decrease in certified radius and certified accuracy of models trained with Gaussian augmentation \cite{cohen2019certified} on poison data compared to those of models trained on clean and watermarked data.
  }
  \label{Table:cohen_attack}
  \centering
  \small
  \resizebox{0.95\columnwidth}{!}{
    \begin{tabular}{c|c|c|cc}
    \toprule
    & \multirow{2}{*}{$\sigma$} & \multirow{2}{*}{Data} &  \multicolumn{2}{c}{\makecell{Certified robustness of target class}}\\
    & & & ACR & ACA(\%) \\
    \midrule
    \multirow{9}{*}{\STAB{\rotatebox[origin=c]{90}{MNIST}}} &
    \multirow{3}{*}{0.25} & Clean   & 0.896$\pm$0.01 &	98.92$\pm$0.32 \\
    & & Watermarked & 0.908$\pm$0.01 & 99.24$\pm$0.29 \\
    & & Poisoned & {\bf0.325}$\pm$0.10 &	{\bf71.96}$\pm$8.28 \\
    \cmidrule{2-5}
    &\multirow{3}{*}{0.5} & Clean & 1.481$\pm$0.02 &	99.16$\pm$0.34\\
    &  & Watermarked & 1.514$\pm$0.06 &	99.12$\pm$0.47\\
    & & Poisoned  & {\bf0.733}$\pm$0.10 &	{\bf90.68}$\pm$3.37 \\
    \cmidrule{2-5}
    &\multirow{3}{*}{0.75} & Clean & 1.549$\pm$0.11 & 98.48$\pm$0.35\\
    & & Watermarked & 1.566$\pm$0.06 &	98.36$\pm$0.39\\
    &  & Poisoned  & {\bf0.698}$\pm$0.13 &	{\bf84.92}$\pm$5.14 \\
    \midrule
    \midrule
    \multirow{6}{*}{\STAB{\rotatebox[origin=c]{90}{CIFAR10}}} &
    \multirow{3}{*}{0.25} & Clean   & 0.521$\pm$0.05 &	85.76$\pm$3.31 \\
    & & Watermarked & 0.470$\pm$0.01 &	83.22$\pm$1.41 \\
    & & Poisoned & {\bf0.059}$\pm$0.02 &	{\bf26.84}$\pm$6.04 \\
    \cmidrule{2-5}
    & \multirow{3}{*}{0.5} & Clean & 0.634$\pm$0.04 &	75.04$\pm$1.65\\
    & & Watermarked & 0.611$\pm$0.18 &	74.01$\pm$9.22\\
    & & Poisoned  & {\bf0.221}$\pm$0.04 &	{\bf42.28}$\pm$6.01 \\
    
    \bottomrule
    \end{tabular}
    }
\end{table}

We use our attack to poison MNIST and CIFAR10 dataset and use ApproxGrad to solve the bilevel optimization. The time complexity for ApproxGrad is $O(VT)$ where $V$ are the number of parameters in the machine learning model and $T$ is the number of lower-level updates. For datasets like Imagenet where the optimization must be performed over a very large number of batches, obtaining the solution to bilevel problems becomes computationally hard. Due to this bottleneck we leave the problem of poisoning Imagenet for future work. For the experiments with MNIST we randomly selected digit 8 and for CIFAR10 the class ``Ship'' as the target class for the attacker. 
The attack results for other target classes are similar and are presented in the Appendix~\ref{app:additional_experiments}.
To ensure that the attack points satisfy the clean label constraint, the maximum permissible $\ell_{\infty}$ distortion is bounded by $\epsilon = 0.1$ for MNIST and $\epsilon = 0.03$ for CIFAR10 which is similar to the value used to generate imperceptible adversarial examples in previous works \cite{madry2017towards,goodfellow2014explaining}. We used  convolutional neural networks for our experiments on MNIST and Resnet-20 model for our experiments with CIFAR10. Model architectures, hyperparameters, generated attack examples (Fig.~\ref{fig:attack_egs} in Appendix), and additional results on transferability of our poisoned samples to models with different architectures 
are presented in Appendix \ref{app:additional_experiments}. 
Since models trained with standard training do not achieve high certified radius \cite{cohen2019certified}, we considered poisoning models trained with methods that improve the certified robustness guarantees to test time attacks.
For comparison, ACR on the target class ``Ship'' with Resnet-20 trained with standard training on clean CIFAR10 dataset is close to zero whereas for the same model trained with GA ($\sigma = 0.25$) ACR is close to 0.5. 
Finally, we show that our attack can withstand the use of weight regularization, which has been shown to be effective at mitigating the effect of poisoning attacks \cite{carnerero2020regularisation}. The results of this experiment are present in Appendix~\ref{app:weight_reg}.

\begin{table}[tb]
  \caption{Decrease in certified radius and certified accuracy of models trained with MACER \cite{zhai2020macer} on poison data compared to those of models trained on clean and watermarked data.
  }
  \label{Table:macer_attack}
  \centering
  \small
  \resizebox{0.95\columnwidth}{!}{
    \begin{tabular}{c|c|c|cc}
    \toprule
    &\multirow{2}{*}{$\sigma$} & \multirow{2}{*}{Data} & 
    \multicolumn{2}{c}{\makecell{Certified robustness of target class}}\\
    && 
    & ACR & ACA(\%) \\
    \midrule
    \multirow{9}{*}{\STAB{\rotatebox[origin=c]{90}{MNIST}}} &
    \multirow{3}{*}{0.25} & Clean   & 0.915$\pm$0.01	& 99.64$\pm$0.21\\
    & & Watermarked & 0.894$\pm$0.01 &	98.84$\pm$0.53\\
    & & Poisoned & {\bf0.431}$\pm$0.13 &	{\bf79.81}$\pm$9.26 \\
    \cmidrule{2-5}
    &\multirow{3}{*}{0.5} & Clean & 1.484$\pm$0.11 &	98.56$\pm$0.41\\
    & & Watermarked & 1.475$\pm$0.08 &	98.68$\pm$0.39\\
    & & Poisoned  & {\bf0.685}$\pm$0.16 &	{\bf84.36}$\pm$6.17 \\
    \cmidrule{2-5}
    &\multirow{3}{*}{0.75} & Clean & 1.353$\pm$0.13 &	93.81$\pm$2.08\\
    & & Watermarked & 1.415$\pm$0.11 &	94.52$\pm$1.58\\
    & & Poisoned  & {\bf1.008}$\pm$0.19 &	{\bf88.41}$\pm$4.64 \\
     
     \midrule
     \midrule
     \multirow{6}{*}{\STAB{\rotatebox[origin=c]{90}{CIFAR10}}} &
     \multirow{3}{*}{0.25} & Clean   & 0.593$\pm$0.05 &	83.84$\pm$2.26 \\
    & & Watermarked & 0.486$\pm$0.04 &	77.01$\pm$0.21 \\
    & & Poisoned & {\bf0.379}$\pm$0.11 &	{\bf72.41}$\pm$9.79\\
    \cmidrule{2-5}
    & \multirow{3}{*}{0.5} & Clean & 0.759$\pm$0.11 &	72.92$\pm$5.06\\
    & & Watermarked & 0.811$\pm$0.10	& 75.66$\pm$2.99\\
    & & Poisoned  & {\bf0.521}$\pm$0.11 &	{\bf65.24}$\pm$6.55 \\
    \bottomrule
    \end{tabular}
    }
\end{table}

\vspace{-0.1cm}
\subsection{Poisoning Gaussian data augmentation \cite{cohen2019certified}}
Here we show the effectiveness of our attack at compromise the certified robustness guarantees obtained with RS on a model trained using the GA. The results of the attack, present in Table~\ref{Table:cohen_attack}, show a significant decrease in the ACR and the certified accuracy of the target class of the model trained on poisoned data compared to the model trained on clean and watermarked data. Since the certified radius and certified accuracy are correlated, our poisoning attack which targets the reduction of certified radius (upper-level problem in Eq.~(\ref{Eq:Bilevel})) also causes a decrease in the certified accuracy. Significant degradation in average certified radius from 0.52 to 0.06 on CIFAR10 with imperceptibly distorted poison data shows the extreme vulnerability of GA to poisoning.

\vspace{-0.1cm}
\subsection{Poisoning MACER \cite{zhai2020macer}}
Here we use the bilevel formulation in Eq.~(\ref{Eq:Bilevel}) with $\mathcal{L}_\mathrm{macer}$ loss in the lower-level and generate poison data to reduce the certification guarantees of models trained with MACER. The poison data is generated with $k=2$, where $k$ are the number of noisy images used per training point to ease the bilevel optimization. However, during retraining $k=16$ is used, which is similar to the one used in the original work \cite{zhai2020macer}. 
The ACR obtained using MACER is higher than that achievable using Gaussian augmentation based training consistent with \cite{zhai2020macer}. However, our data poisoning attack is still able to reduce the average certified radius of the method by more than 30\% (Table~\ref{Table:macer_attack}) even though the attack is evaluated against a much stronger defense ($k=16$ for retraining compared to $k=2$ for poisoning) than what the poison data was optimized against. This shows that the use of a larger number of noisy samples ($k$) cannot eliminate the effect of the attack, emphasising the importance of the threat posed by data poisoning.

\begin{table}[tb]
  \caption{
  Decrease in certified radius and certified accuracy of models trained with SmoothAdv \cite{salman2019provably} on poison data compared to those of models trained on clean and watermarked data.
  }
  \label{Table:smoothadv_attack}
  \centering
  \small
  \resizebox{0.95\columnwidth}{!}{
    \begin{tabular}{c|c|c|cc}
    \toprule
    &\multirow{2}{*}{$\sigma$} & \multirow{2}{*}{Data} & 
    \multicolumn{2}{c}{\makecell{Certified robustness of target class}}\\
    && 
    & ACR & ACA(\%) \\
    \midrule
    \multirow{9}{*}{\STAB{\rotatebox[origin=c]{90}{MNIST}}} &
    \multirow{3}{*}{0.25} & Clean   & 0.896$\pm$0.01 &	99.16$\pm$0.45 \\
    & & Watermarked & 0.906$\pm$0.01 &	99.28$\pm$0.16 \\
    & & Poisoned & {\bf0.672}$\pm$0.04 &	{\bf93.21}$\pm$1.92 \\
    \cmidrule{2-5}
   & \multirow{3}{*}{0.5} & Clean & 1.408$\pm$0.05 &	99.21$\pm$0.25\\
    & & Watermarked & 1.401$\pm$0.02 &	98.01$\pm$0.18\\
    & & Poisoned  & {\bf1.037}$\pm$0.06 &	{\bf93.81}$\pm$1.31 \\
   \cmidrule{2-5}
   & \multirow{3}{*}{0.75} & Clean & 1.262$\pm$0.05 &	95.68$\pm$0.47\\
    & & Watermarked & 1.433$\pm$0.03 &	97.21$\pm$0.13\\
    & & Poisoned  & {\bf0.924}$\pm$0.06 &	{\bf88.88}$\pm$0.18 \\
    \midrule
     \midrule
     \multirow{6}{*}{\STAB{\rotatebox[origin=c]{90}{CIFAR10}}} &
    \multirow{3}{*}{0.25} & Clean   & 0.504$\pm$0.02	& 78.76$\pm$0.81 \\
    & & Watermarked & 0.441$\pm$0.02 & 70.16$\pm$2.12 \\
    & & Poisoned & {\bf0.271}$\pm$0.02 &	{\bf55.78}$\pm$0.96 \\
    \cmidrule{2-5}
    &\multirow{3}{*}{0.5} & Clean & 0.479$\pm$0.07 &	65.84$\pm$4.81\\
    & & Watermarked & 0.473$\pm$0.02 &	62.51$\pm$2.12\\
   &  & Poisoned  & {\bf0.277}$\pm$0.02 &	{\bf49.11}$\pm$3.19 \\
    
    \bottomrule
    \end{tabular}
    }
\end{table}

\begin{figure}[tb]
  \centering
  \includegraphics[width=0.6\columnwidth]{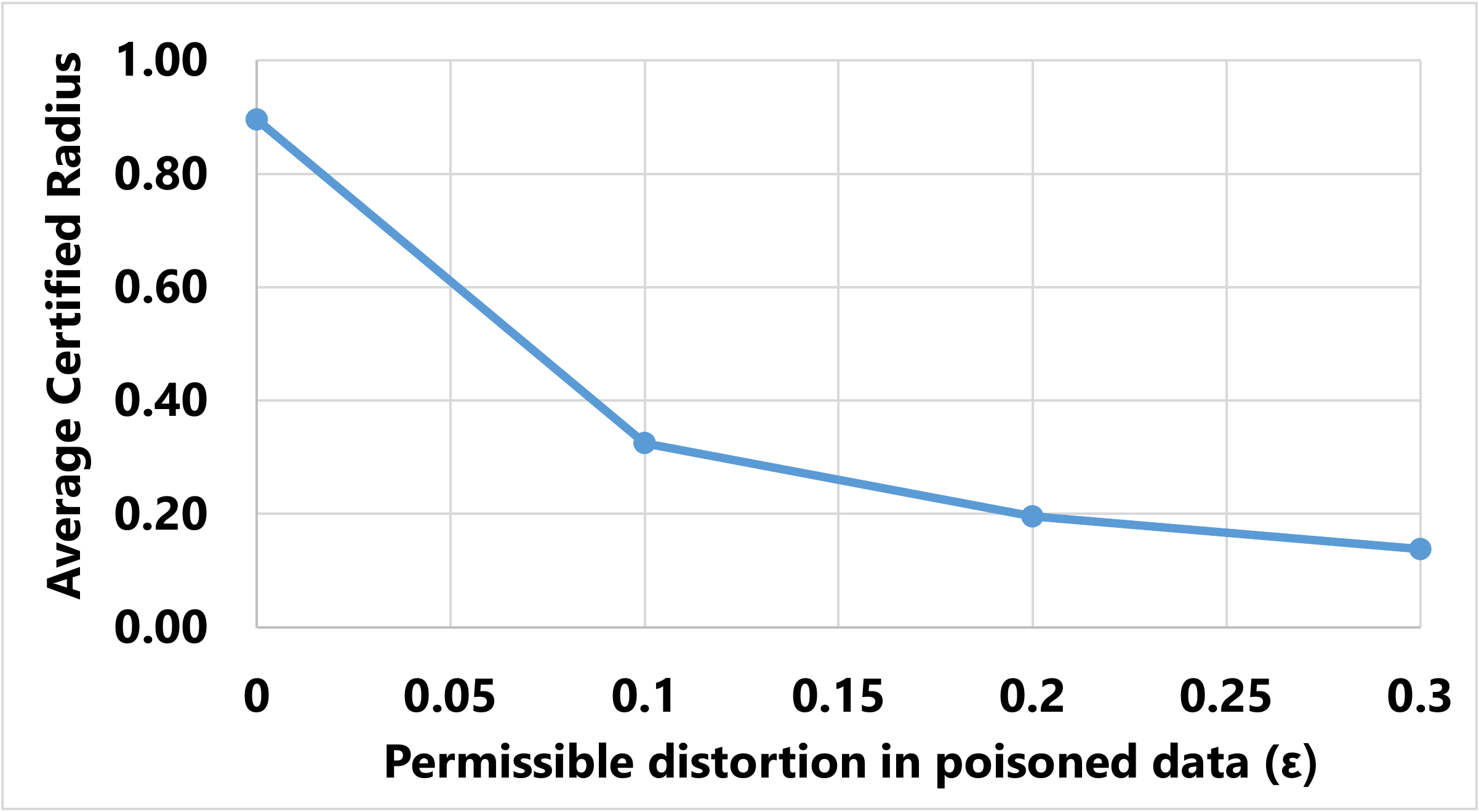}
  \includegraphics[width=0.6\columnwidth]{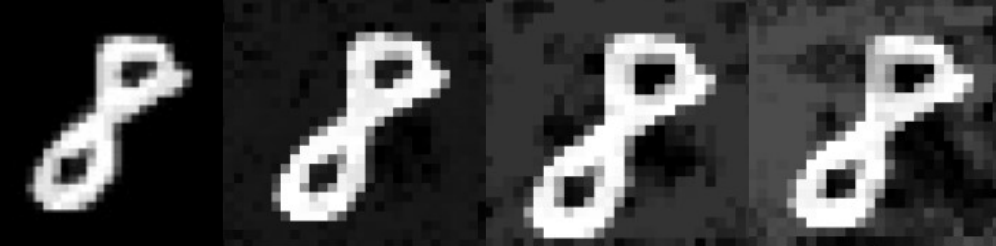}
  \caption{
  (Upper) ACR degrades more if larger perturbations are permitted to create poison data. But larger perturbation makes the poison points visibly distorted making them easier to detect with inspection (Lower).
  Poison data are generated with $\epsilon \in \{0,0.1,0.2,0.3\}$. We have used $\epsilon = 0.1$ for our attacks.}
  \label{fig:acr_vs_epsilon}
\end{figure}

\begin{figure*}[tb]
  \centering{
  \subfigure[Average certified radius of digit 8 in MNIST]{\includegraphics[width=0.775\columnwidth]{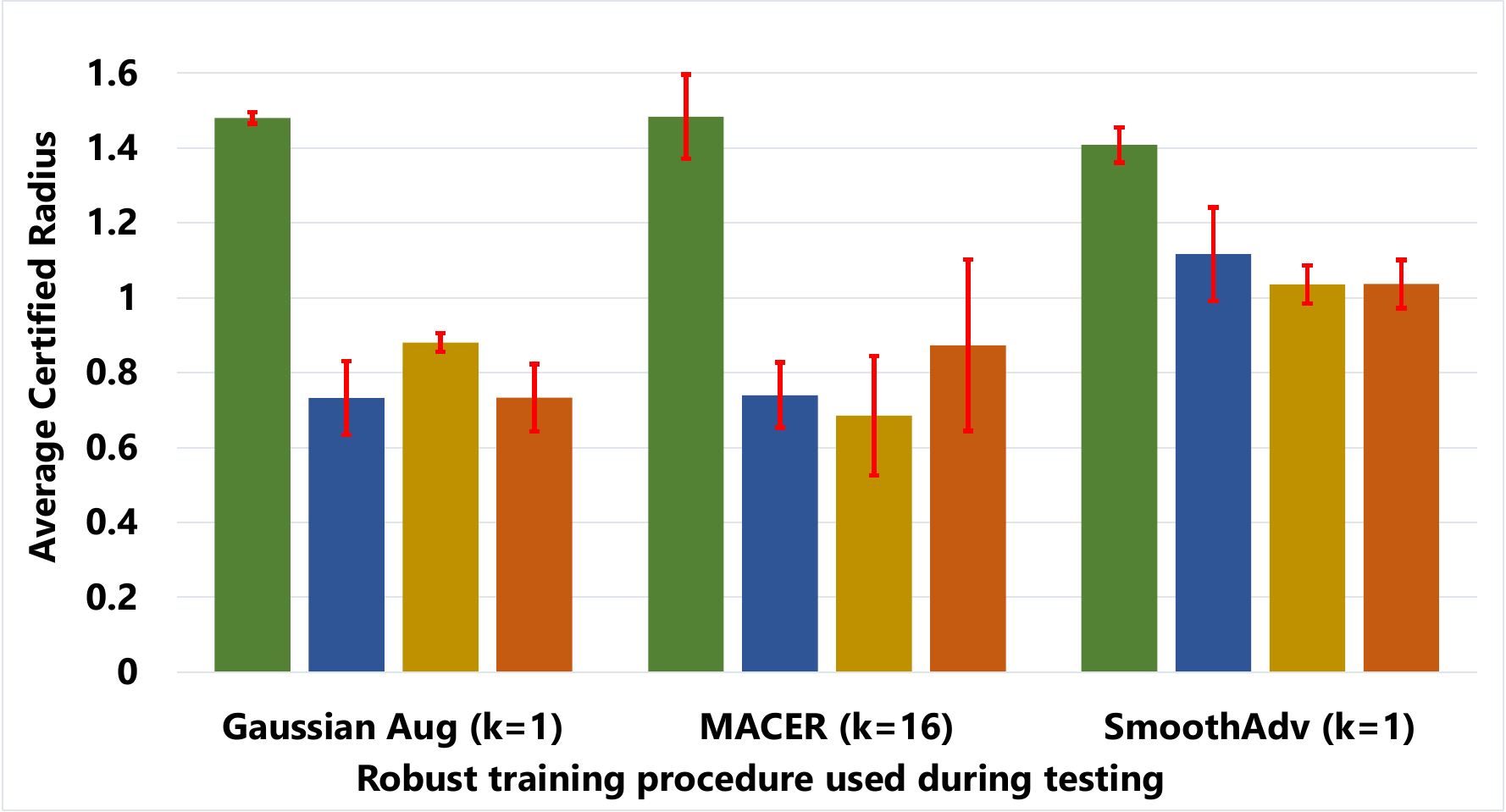}}
  \hspace{0.15in}
  \subfigure[Approximate certified accuracy of digit 8 in MNIST]{\includegraphics[width=0.775\columnwidth]{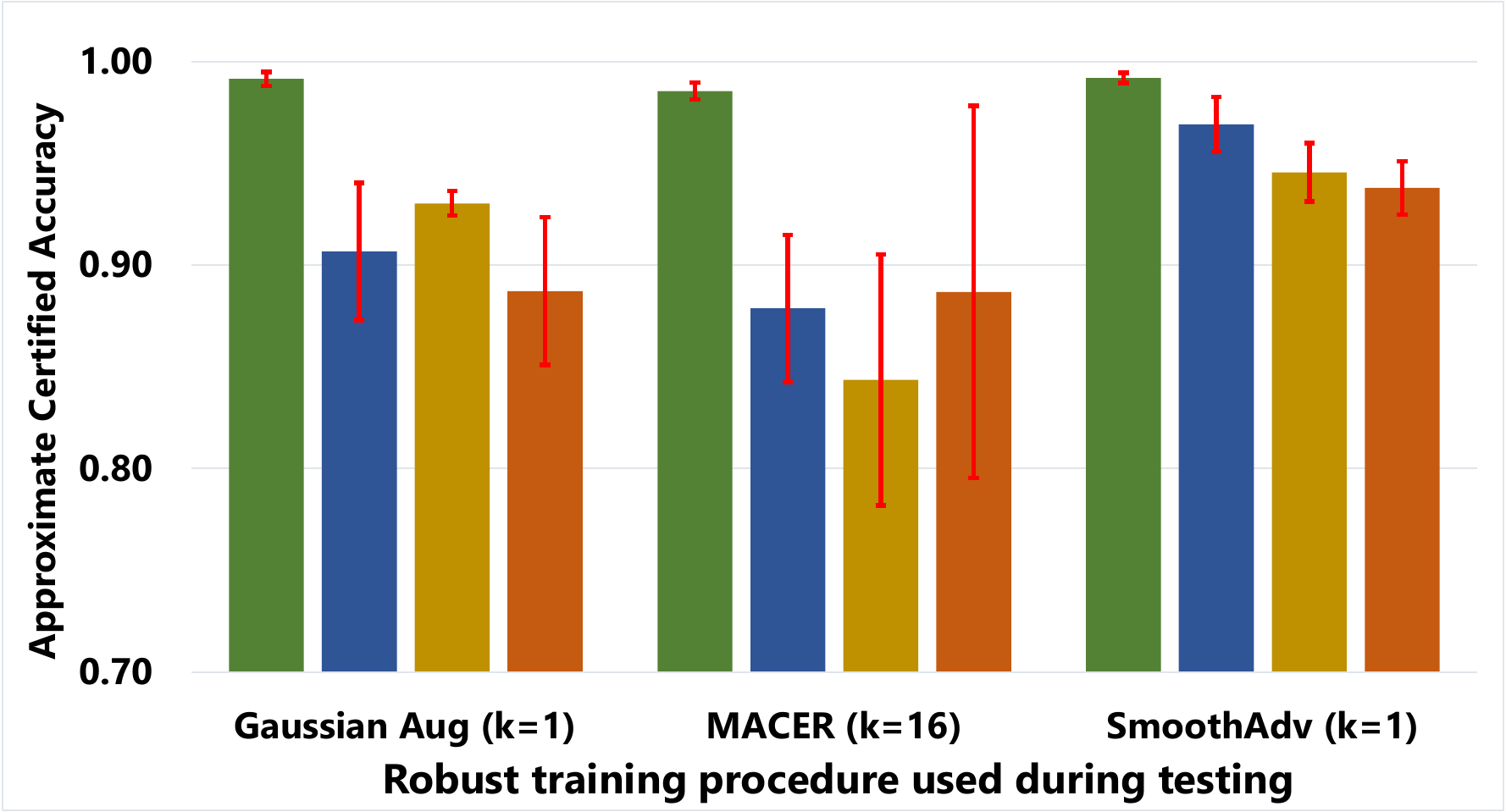}}
  \subfigure[Average certified radius of ``Ship'' in CIFAR10]{\includegraphics[width=0.775\columnwidth]{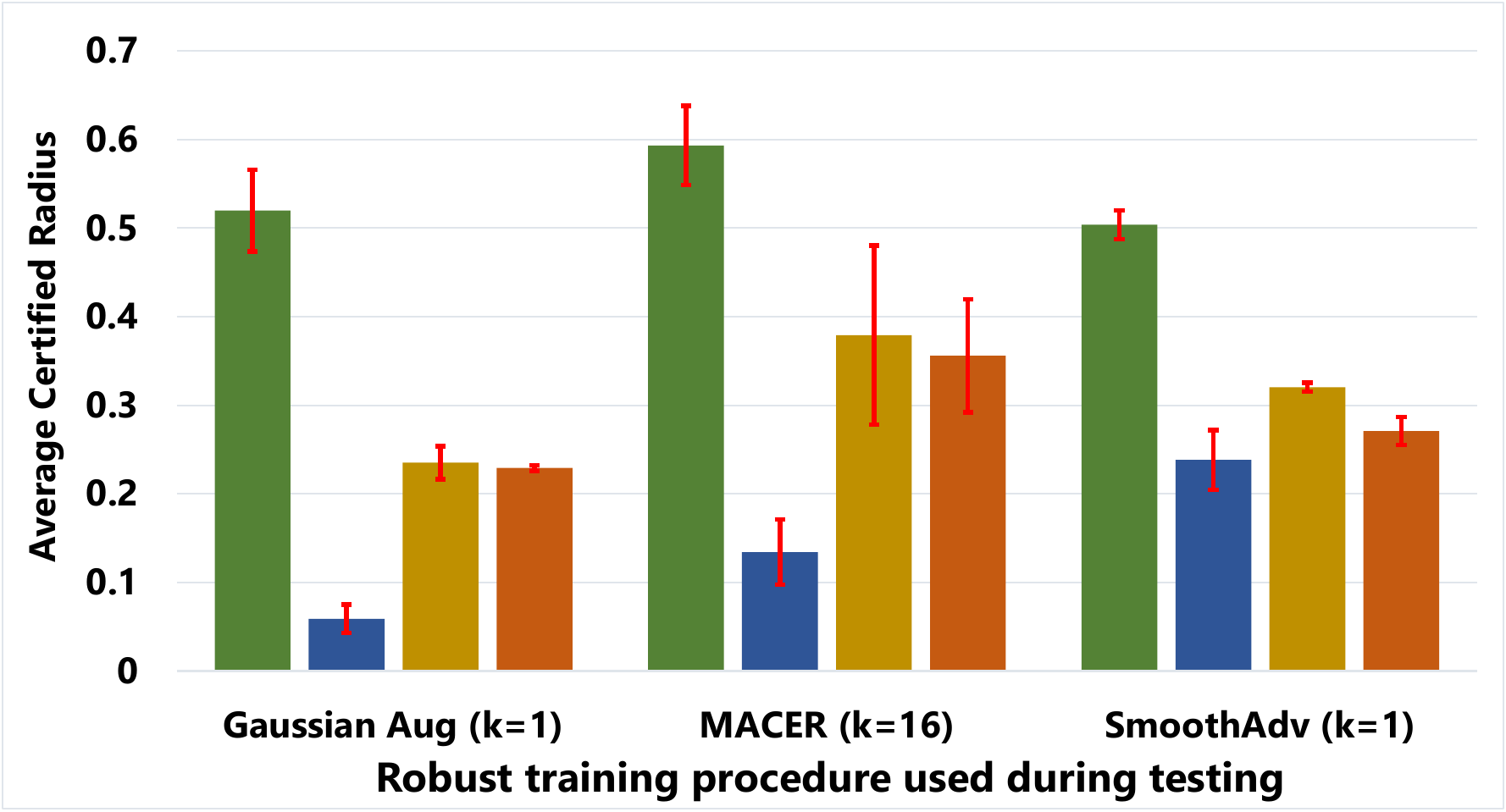}}
  \hspace{0.15in}
  \subfigure[Approximate certified accuracy of ``Ship'' in CIFAR10]{\includegraphics[width=0.775\columnwidth]{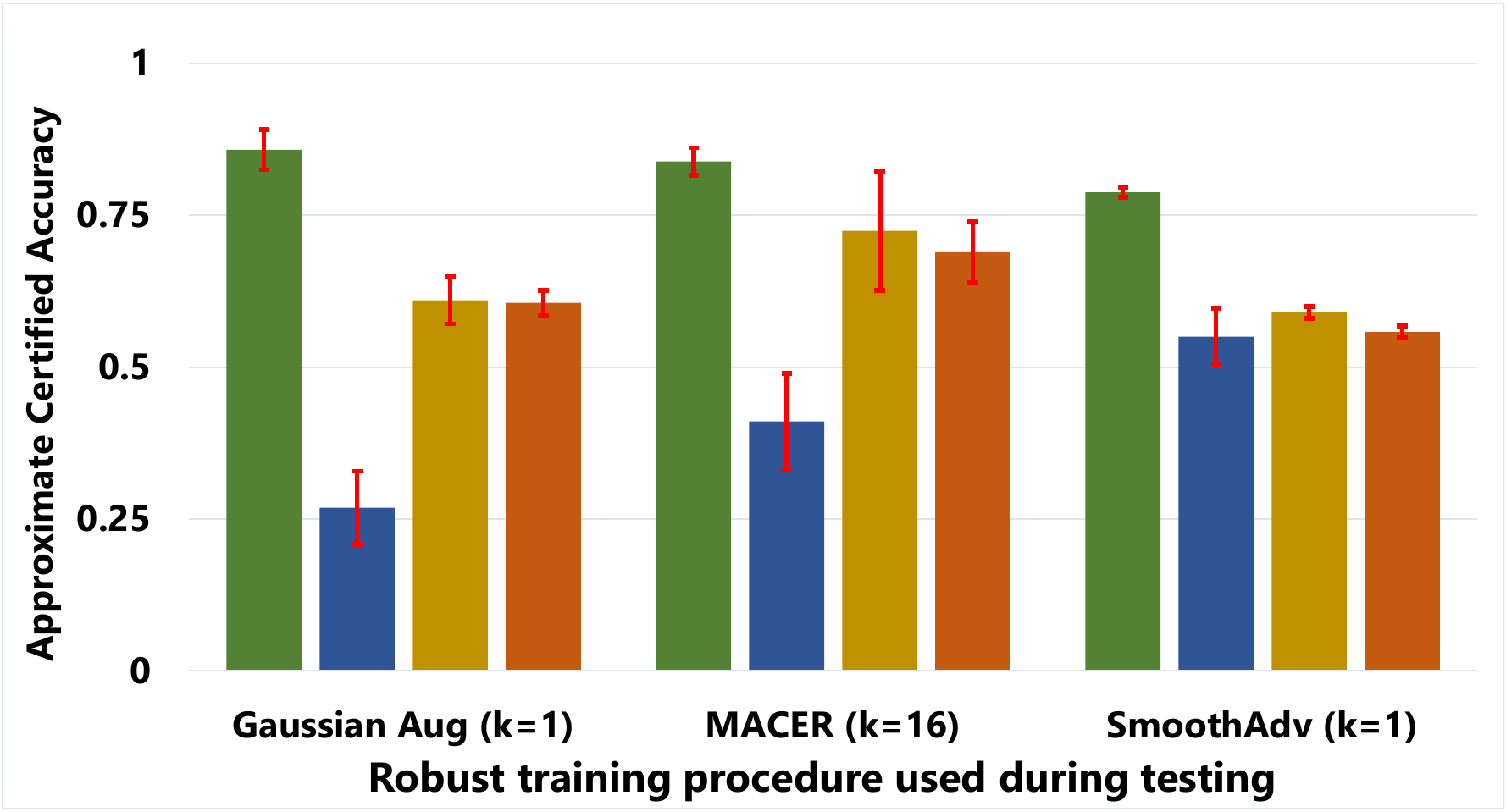}}
  \includegraphics[width=0.9\textwidth]{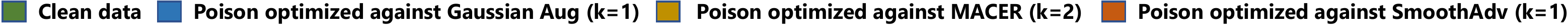}
  }
  \caption{Successful transferability of our poisoned data when victim uses a training procedure different than the one used by the attacker to optimize the poison data. $\sigma = 0.5$ and $\sigma = 0.25$ are used for smoothing during training and evaluation for MNIST and CIFAR10.
  }
  \label{fig:transfer}
\end{figure*}

\begin{table}[tb]
  \caption{Decrease in the mean $\ell_2$-distortion needed to generate adversarial examples using PGD attack against the smooth classifier. This shows that the decision boundary of the smooth classifier is closer to the clean test points of the target class after poisoning.
  }
  \label{Table:empirical_robustness}
  \centering
  \small
  \resizebox{0.75\columnwidth}{!}{
    \begin{tabular}{c|c|c|c}
    \toprule
    & $\sigma$  & Clean data & Poisoned data\\
    \midrule
    \multirow{3}{*}{MNIST} & 0.25 & 3.271$\pm$0.10 & {\bf1.339}$\pm$0.16 \\
   & 0.5 & 3.637$\pm$0.15 & {\bf2.170}$\pm$0.09 \\
   & 0.75 & 3.961$\pm$0.18 & {\bf2.213}$\pm$0.31 \\
     \midrule
     \multirow{2}{*}{CIFAR10} &0.25 & 1.754$\pm$0.17 & {\bf0.132}$\pm$0.04 \\
    &0.5 & 1.996$\pm$0.09 & {\bf0.367}$\pm$0.06 \\
    \bottomrule
    \end{tabular}
    }
\end{table}

\subsection{Poisoning SmoothAdv \cite{salman2019provably}}
Here we present the results of our attack on models trained with SmoothAdv. To yield model with high certified robustness this training method requires the model parameters to be optimized using adversarial training of the smoothed classifier. We used 2 step PGD attack to obtain adversarial example of each point in the batch. We used a single noisy instance of the adversarial example while doing adversarial training. Although, using larger $k$ makes the certified robustness guarantees better, we used $k=1$ to save the computational time required for adversarial training. For bilevel training, we followed the similar procedure to generate adversarial examples for clean and poison data. The adversarial examples are then used as the data for the lower-level problem of Eq.~(\ref{Eq:Bilevel}) to do GA training for optimizing the network parameters. The batch of adversarial examples are recomputed against the updated model after each step of bilevel training. Note that this is an approximation to the actual solution of the minimax problem that has to be solved in the lower-level for generating poison data against SmoothAdv. However, the effectiveness of the attack (results in Table~\ref{Table:smoothadv_attack}) suggests that our approximation works well in practice and certified robustness guarantees achieved from SmoothAdv can be degraded by poisoning. 

\subsection{Effect of the imperceptibility constraint}
Here we evaluate the effect of using different values of the perturbation strength $\epsilon$ which controls the maximum permissible distortion in Eq.~(\ref{Eq:Bilevel}). We use $\sigma = 0.25$ for smoothing and GA based training to generate and evaluate the attack. The results are summarized in Fig.~\ref{fig:acr_vs_epsilon}, which show that the ACR of the target class decreases as $\epsilon$ increases rendering certification guarantees useless. This is expected since larger $\epsilon$ creates a larger distribution shift among the target class data in the training and the test sets. However, larger permissible distortion to the data make the attack easier to detect by inspection. This is not be desirable from an attacker's perspective who wants to evade detection.

\subsection{Transferability of poisoned data}
Here we report the performance of the models trained on the poison data using different training procedures than the one assumed by the attacker for crafting poison data. We used $k=1$ and 2 steps of PGD attack to generate adversarial examples for SmoothAdv and $k=16$ for MACER during retraining. 
The poison data generated against MACER was optimized using $k=2$. 
The results are summarized in Fig.~\ref{fig:transfer}, which show that poisoned data optimized against any robust training procedure causes significant reduction in the certified robustness of models trained with a different training methods. Interestingly, poisoned data optimized against GA is extremely effective against other methods, considering the fact that it is the simplest of the three methods. The successful transferability of the poisoned data 
across different training methods shows how brittle these methods can be when faced with a poisoned dataset. 
We observe a similar success in transferability of the poison data to models with different architectures (Appendix~\ref{app:transfer_architecture}).

\if0
\begin{table*}
  \caption{Transfer-ability of the attack points when victim uses a different training procedure than used by the attacker to generate poison data for MNIST with digit 8 being the target class. $\sigma = 0.5$ is used for smoothing during training and evaluation. Certified robustness guarantees of each training method on clean data are present in the last row for reference.
  }
  \label{Table:mnist_transferability}
  \centering
  \small
  \resizebox{0.85\textwidth}{!}{
    \begin{tabular}{c|cc|cc|cc}
    \toprule
    \multirow{2}{*}{\makecell{Training method used for\\ generating poison data}} & \multicolumn{6}{c}{\makecell{Training method used for evaluation}}\\
    &\multicolumn{2}{c|}{\makecell{Gaussian Aug}} & \multicolumn{2}{c|}{\makecell{MACER \cite{zhai2020macer}}} & \multicolumn{2}{c}{\makecell{SmoothAdv \cite{salman2019provably}}}\\
    \midrule
    & ACR & ACA(\%) & ACR & ACA(\%) & ACR & ACA(\%)  \\
    \midrule
    Gaussian Aug & 0.733$\pm$0.10 &	90.68$\pm$0.03 &	0.741$\pm$0.09 &	87.88$\pm$0.04 &	1.116$\pm$0.12 &	96.91$\pm$0.01\\
    MACER \cite{zhai2020macer} & 0.881$\pm$0.03 &	93.10$\pm$0.01 &	0.685$\pm$0.16 &	84.45$\pm$0.06 &	1.036$\pm$0.05 &	94.65$\pm$0.01\\
    SmoothAdv \cite{salman2019provably} & 0.733$\pm$0.09 &	88.72$\pm$0.04 &	0.8732$\pm$0.23 &	88.68$\pm$0.09 &	1.037$\pm$0.06 &	93.81$\pm$0.01\\
    \midrule
    Clean data & 1.481$\pm$0.02 &	99.16$\pm$0.01&	1.484$\pm$0.11 &	98.56$\pm$0.01 &	1.408$\pm$0.05 &	99.21$\pm$0.01\\

    \bottomrule
    \end{tabular}
    }
\end{table*}
\fi

\if0
\begin{table*}
  \caption{Transferability of the attack points when victim uses a different training procedure than used by the attacker to generate poison data for CIFAR10 with class ``Ship'' being the target class. $\sigma = 0.25$ is used for smoothing during training and evaluation. Certified robustness guarantees of each training method on clean data are present in the last row for reference.
  }
  \label{Table:cifar10_transferability}
  \centering
  \small
  \resizebox{0.85\textwidth}{!}{
    \begin{tabular}{c|cc|cc|cc}
    \toprule
    \multirow{2}{*}{\makecell{Training method used for\\ generating poison data}} & \multicolumn{6}{c}{\makecell{Training method used for evaluation}}\\
    &\multicolumn{2}{c|}{\makecell{Gaussian Aug}} & \multicolumn{2}{c|}{\makecell{MACER \cite{zhai2020macer}}} & \multicolumn{2}{c}{\makecell{SmoothAdv \cite{salman2019provably}}}\\
    \midrule
    & ACR & ACA(\%) & ACR & ACA(\%) & ACR & ACA(\%)  \\
    \midrule
    Gaussian Aug & 0.059$\pm$0.01 &	26.84$\pm$0.06&	0.134$\pm$0.04&	41.08$\pm$0.08	&0.238$\pm$0.03&	55.01$\pm$0.05\\
    MACER \cite{zhai2020macer}& 0.235$\pm$0.02&	61.01$\pm$0.04&	0.379$\pm$0.10&	72.41$\pm$0.10&	0.3205$\pm$0.01 & 59.01$\pm$0.01\\
    SmoothAdv \cite{salman2019provably} & & & & &\\
    \midrule
    Clean data & 0.521$\pm$0.05 &	85.76$\pm$0.03 &	0.593$\pm$0.05 &	83.84$\pm$0.03 & 0.504$\pm$0.02	& 78.76$\pm$0.01 \\

    \bottomrule
    \end{tabular}
    }
\end{table*}
\fi

\vspace{-0.2cm}
\subsection{Empirical robustness of poisoned classifiers}\label{sec:exp_emp_robustness}
\vspace{-0.1cm}
Finally, we report the empirical robustness of the smoothed classifier where the base classifier is trained on clean and poisoned data using GA. 
The poisoned data is generated against GA training in the lower-level as in Eq.~(\ref{Eq:Bilevel}). We report the mean $\ell_2$-distortion required to generate an adversarial example using the PGD attack \cite{salman2019provably} against the smoothed classifier using 200 and 100 randomly sampled test points of the target class from MNIST and CIFAR10, respectively, in Table~\ref{Table:empirical_robustness}. 
We observe that our poisoning leads to a decrease in the empirical robustness of the smoothed classifier on clean test data. 
This backs up our hypothesis
that the decision boundary of the smooth classifier must be changed to reduce the certified radius in nonlinear classifiers, similar to linear classifiers (Fig.~\ref{fig:linear}).

\if0
\begin{figure}[tb]
  \centering{
  \subfigure[]{\includegraphics[width=0.9\columnwidth]{images/acr_cifar10_c.pdf}}
  \subfigure[]{\includegraphics[width=0.9\columnwidth]{images/aca_cifar10_c.pdf}}
  }
  \caption{Transferability of the attack points when victim uses a different training procedure than used by the attacker to generate poison data for CIFAR10 with class ``Ship'' being the target class. $\sigma = 0.25$ is used for smoothing during training and evaluation.}
  \label{fig:transfer_cifar10}
\end{figure}
\fi

\vspace{-0.1cm}
\section{Conclusion}
\vspace{-0.15cm}
Certified robustness has emerged as a gold standard to gauge with certainty the susceptibility of machine learning models to test-time attacks. 
In this work, we showed that these guarantees can be rendered ineffective by our bilevel optimization based data poisoning attack that adds imperceptible perturbations to the points of the target class.
Unlike previous data poisoning attacks, our attack can reduce the ACR of an entire target class and is even effective against models trained using training methods that have been shown to improve certified robustness. Our results suggests that data quality is a crucial factor in achieving high certified robustness guarantees but is overlooked by current approaches.


\vspace{-0.15cm}
\section{Acknowledgments}
\vspace{-0.15cm}
This work was supported by the NSF EPSCoR-Louisiana Materials Design Alliance (LAMDA) program \#OIA-1946231 and by LLNL Laboratory Directed Research and Development project 20-ER-014 (LLNL-CONF-817233). 
This work was performed under the auspices of the U.S. Department of Energy by the Lawrence Livermore National Laboratory under Contract No. DE-AC52-07NA27344, Lawrence Livermore National Security, LLC.\footnote{The views and opinions of the authors do not necessarily reflect those of the U.S. government or Lawrence Livermore National Security, LLC neither of whom nor any of their employees make any endorsements, express or implied warranties or representations or assume any legal liability or responsibility for the accuracy, completeness, or usefulness of the information contained herein.} 


\clearpage

{\small
\bibliographystyle{ieee_fullname}
\bibliography{egbib}
}

\clearpage
\appendix
\begin{center}
{\LARGE \bf Appendix}
\end{center}
We present the proof of Theorem~\ref{thm:linear} and Corollary~\ref{cor:partial_poisoning} in App.~\ref{app:proofs} followed by a review of bilevel optimization in App.~\ref{app:approxgrad} and our attack algorithm in App.~\ref{app:algorithms}. In App.~\ref{app:additional_experiments}, we present the results of additional experiments on poisoning different classes in the dataset, successful transferability of our poisoning attack to deeper models, performance of the attack when targeting a single test point and effect of using weight regularization on the attack success. We conclude in App.~\ref{app:experiments_details} by providing details of the hyperparameters and models architectures used in the experiments. 

\if0
\section{Temp-JH}

Is there a case of poisoning that reduces the radius but not the margin which is analytically tractable?

Well, in the paragraph before Sec 5 of [cohen], it is mentioned that there is a gap between the margin and the CR. We should cite that results too.

\subsubsection*{A better counterexample?}

By the way, $\Psi(t):=\int_{-\infty}^t x N(x;0,1)\;dx = -\phi(t)$ where
$\phi(t)=1/\sqrt{2\pi} e^{-t^2/2}$.

Interval classifiers. Suppose we have an interval classifier $h(x;b,c)$ such that
$h(x)=+1\;\;\mathrm{iff}\;\;b\leq x \leq c$ and $h(x)=-1$ otherwise. $(b < c)$
The smooth classifier with $\sigma$ is $h(x;b',c')$. What is $b'$ and $c'$?
For $b<x<c$, $P_1(x)=\Phi((c-x)/\sigma)-\Phi((b-x)/\sigma)$ so $b'$ and $c'$ are such that
$P_1(x)=0.5$.
Also, if $P_1(0.5(b+c))=\Phi(0.5(c-b)/\sigma)-\Phi(0.5(b-c)/\sigma)
=2\Phi(0.5(c-b)/\sigma)-1 =0.5$ then the $c'=b'$ and the interval disappears.
That maximum $\sigma$ value can be computed numerically by 
$0.5(c-b)/s = \Phi^{-1}(0.75)$ so $s = 0.5(c-b)/\Phi^{-1}(0.75)$.

If $P(x|y=1) = N(0,1)$ and $P(x|y=-1)=0.5\mathcal{N}(-a,1) + 0.5 \mathcal{N}(a,1)$, 
what is the AvgCR and AvgMar of the smoothed interval classifier $h(x;b',c')$?

For positive points,  $(P_{+}(x)=N(x;0,1))$.
\begin{eqnarray*}
AvgMar(b',c')&=&\int_{b'}^{c'} \min (|x-b'|,|x-c'|) P_{+}(x)\;dx\\
&=&\int_{b'}^{(b'+c')/2}(x-b') N(x;0,1)\;dx + \int_{(b'+c')/2}^{c'} (c'-x) N(x;0,1)\;dx\\
&=& \Psi((b'+c')/2))-\Psi(b')-b'(\Phi((b'+c')/2))-\Phi(b'))\\
&&+(\Psi((b'+c')/2))-\Psi(c')+c'(\Phi(c')-\Phi((b'+c')/2))
\end{eqnarray*}
and the mean certified radius is
$\int_{b'}^{c'} CR(x)P_{+}(x)\;dx$ where 
\[
CR(x)=\sigma \Phi^{-1}(P_1(x)),\;\;
P_1(x) = \Phi((c-x)/{\sigma})-\Phi({(b-x)}/{\sigma})
\]
and therefore 
\[
AvgCR(b,c,\sigma)=\int_{b'}^{c'} \sigma \Phi^{-1}\left(\Phi({(c-x)}/{\sigma})-\Phi({(b-x)}/{\sigma})\right) P_{+}(x)\;dx.
\]
(Note that the range of integration is $[b',c']$ and not $[b,c]$.

Let's numerically compute those values as functions of $b$ and $c$:
\if0
0.741301109252801
[b,c]=[-0.50,0.50],s=0.12: [b_,c_]=[-0.50,0.50], 0.0977(avgmar)-0.0974(avgcr)=0.0003
[b,c]=[-0.50,0.50],s=0.25: [b_,c_]=[-0.50,0.50], 0.0977(avgmar)-0.0936(avgcr)=0.0041
[b,c]=[-0.50,0.50],s=0.38: [b_,c_]=[-0.50,0.50], 0.0963(avgmar)-0.0804(avgcr)=0.0159
[b,c]=[-0.50,0.50],s=0.50: [b_,c_]=[-0.47,0.47], 0.0853(avgmar)-0.0554(avgcr)=0.0299
[b,c]=[-0.50,0.50],s=0.62: [b_,c_]=[-0.37,0.37], 0.0543(avgmar)-0.0231(avgcr)=0.0313
[b,c]=[-0.50,0.50],s=0.72: [b_,c_]=[-0.15,0.15], 0.0093(avgmar)-0.0014(avgcr)=0.0079
\fi
What does it mean?

If $P(x|y=-1)=N(x;-1,1)$, for example, than the decision boundary $x=-0.5$ doesn't change after smoothing and the mean margin/cr of the positive distribution is
\[
avgmar =\int_{-0.5}^{\infty} (x+0.5)N(x;0,1)\;dx = -\Psi(-0.5)+0.5(1-\Phi(-0.5))=0.6978.
\]
So by adding negative distribution on the right of positive distribution, both avgmar and avgcr decreases dramatically, but avgcr decrease even more. 

However, the difference between avgmar and avgcr isn't too much ....
Are there even better examples ? 

\subsubsection*{Even better counterexample?}
\fi

\section{Proofs}\label{app:proofs}

\begin{theoremrep}[\ref{thm:linear}]
If the perturbation is large enough, i.e., $\epsilon \geq \frac{\sum_i x_i^{+} - \sum_i x_i^{-}}{n}$ then there are two locally optimal solutions to (\ref{eq:bilevel_linear}) which are $u_i = x_i^{-} - \epsilon$ (Case 1) and $u_i = x_i^{-} + \epsilon$ (Case 2) for $i=1,...,n$. 
Otherwise, the is a unique globally optimal solution which is $u_i = x_i^{-} - \epsilon$ (Case 1) for $i=1,...,n$.
\end{theoremrep}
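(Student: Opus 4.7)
The plan is to reduce the bilevel problem to a one-dimensional optimization over $S^{-} := \sum_i u_i$ by solving the lower-level in closed form, then to argue that the reduced objective is monotone on each of two regimes separated by a sign flip of~$w^{\ast}$.

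First I would solve the lower-level squared-loss regression explicitly. Setting the two normal equations to zero and letting $S^{+} := \sum_i x_i^{+}$, $Q^{+} := \sum_i (x_i^{+})^2$, $Q^{-} := \sum_i u_i^2$, one obtains
\[
 b^{\ast} = -\tfrac{w^{\ast}(S^{+}+S^{-})}{2n},\qquad
 w^{\ast} = \frac{2n(S^{+}-S^{-})}{2n(Q^{+}+Q^{-})-(S^{+}+S^{-})^2},
\]
and therefore the decision threshold $t^{\ast} := -b^{\ast}/w^{\ast} = (S^{+}+S^{-})/(2n)$. The Cauchy--Schwarz inequality makes the denominator of $w^{\ast}$ strictly positive whenever the training points are not all equal, so $\mathrm{sign}(w^{\ast}) = \mathrm{sign}(S^{+}-S^{-})$. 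Crucially, both $t^{\ast}$ and $\mathrm{sign}(w^{\ast})$ depend on the poisoning variables only through the single scalar $S^{-}$, not through individual $u_i$'s or through $Q^{-}$.

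Next I would substitute into the upper-level objective, splitting on the sign of $w^{\ast}$:
\[
 \Psi(S^{-}) =
 \begin{cases}
   \mathbb{E}_{P^{-}}[\max(t^{\ast}(S^{-})-x,\,0)] & \text{if } S^{-}<S^{+},\\[2pt]
   \mathbb{E}_{P^{-}}[\max(x-t^{\ast}(S^{-}),\,0)] & \text{if } S^{-}>S^{+}.
 \end{cases}
\]
Because $t^{\ast}$ is an affine increasing function of $S^{-}$ and $\max(\cdot,0)$ is monotone, $\Psi$ is monotonically non-decreasing in $S^{-}$ on the first branch and monotonically non-increasing on the second branch. Hence on each branch the minimum over the feasible range is attained at the endpoint of that branch farthest from $S^{+}$. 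Since the box constraints $|u_i-x_i^{-}|\le\epsilon$ translate exactly to $S^{-}\in[\sum_i x_i^{-}-n\epsilon,\ \sum_i x_i^{-}+n\epsilon]$, and the extremes of this interval are attained uniquely (among feasible $u$) by the uniform shifts $u_i=x_i^{-}\pm\epsilon$, the two candidate local minimizers are precisely Case~1 and Case~2 of the theorem statement.

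Finally I would verify the feasibility and optimality dichotomy driven by the size of $\epsilon$. The right branch $S^{-}>S^{+}$ is reachable only when $\sum_i x_i^{-}+n\epsilon>\sum_i x_i^{+}$, i.e.\ exactly when $\epsilon\ge(\sum_i x_i^{+}-\sum_i x_i^{-})/n$; in that regime both shifts are local minima (one on each branch) because the objective is strictly increasing away from each endpoint along its own branch. Otherwise the feasible set lies entirely inside $S^{-}<S^{+}$, $\Psi$ is monotone non-decreasing throughout, and $u_i=x_i^{-}-\epsilon$ is the unique global minimizer. The main obstacle I anticipate is the behavior across the transition $S^{-}=S^{+}$, where $w^{\ast}=0$ and $t^{\ast}$ is ill-defined; I would handle this by noting that as $S^{-}\uparrow S^{+}$ the cost on the left branch increases up to $\mathbb{E}_{P^{-}}[\max(\bar{x}^{+}-x,0)]$, so the transition point is strictly worse than either endpoint under the stated assumption $\sum_i x_i^{-}<\sum_i x_i^{+}$, confirming that no interior optimum exists and that the extreme-point argument is valid.
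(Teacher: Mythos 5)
Your proposal is correct and follows essentially the same route as the paper's proof: solve the least-squares lower level in closed form to get the threshold $t^{\ast}=(\sum_i u_i+\sum_i x_i^{+})/(2n)$ with $\mathrm{sign}(w^{\ast})=\mathrm{sign}(\sum_i x_i^{+}-\sum_i u_i)$, reduce to a scalar problem, and use monotonicity of the certified-radius objective on each sign branch so the minima sit at the endpoints $u_i=x_i^{-}\mp\epsilon$, with the branch $w^{\ast}<0$ feasible exactly when $\epsilon\geq(\sum_i x_i^{+}-\sum_i x_i^{-})/n$. Your explicit treatment of the degenerate transition $w^{\ast}=0$ is a small addition beyond what the paper writes, but it does not change the argument.
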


\begin{proof}
Let $t=-\frac{b}{w}$ be the threshold of the linear classifier.
Also let $\Phi(t):=\int_{-\infty}^{t} P_{-}(x)\;dx$ and $\Psi(t):=\int_{-\infty}^{t} x P_{-}(x)\;dx$.
There are two cases to consider.
\\
{\bf Case 1 ($w>0$)}:  The upper-level cost function is 
\begin{eqnarray*}
f(t)&=&\int_{-\infty}^{t} (t -x)P_{-}(x)\;dx= t\Phi(t)-\Psi(t)
\end{eqnarray*}
Note that the range $[-\infty,t]$ is where classification is correct for the test data. (Certified radius is 0 for misclassified points by definition.)

The closed-from solution of the lower-level problem gives us $t=-\frac{b}{w}=\frac{\sum_i u_i + \sum_i x_i^{+}}{2n}$, and therefore the perturbation bound $|u_i-x_i^{-}|\leq \epsilon$ implies 
$ \sum_i x_i^{-} - n\epsilon \leq  \sum_i u_i \leq \sum_i x_i^{-} + n\epsilon$
and therefore 
\[
-\frac{\epsilon}{2}+\frac{\sum_i x_i^{+}+\sum_i x_i^{-}}{2n} \leq t \leq \frac{\epsilon}{2}+\frac{\sum_i x_i^{+}+\sum_i x_i^{-}}{2n}.
\]
Also, the assumption $w>0$ poses another constraint:\\  $w \propto \sum_i x_i^{+} - \sum_i u_i>0$ and therefore \\$t = \frac{\sum_i u_i + \sum_i x_i^{+}}{2n} \leq \frac{\sum_i x_i^{+}}{n}$. 
The upper-level problem is therefore
\begin{eqnarray*}
&&\min_{t}\; f(t)=t \Phi(t) - \Psi(t) \;\;\;\;\mathrm{s.t.}\\
&&\;\; -\frac{\epsilon}{2}+\frac{\sum_i x_i^{+}+\sum_i x_i^{-}}{2n} \leq t \leq \frac{\epsilon}{2}+\frac{\sum_i x_i^{+}+\sum_i x_i^{-}}{2n} \\
&&\;\;\mathrm{and}\;\;t\leq\frac{\sum_i x_i^{+}}{n}.
\end{eqnarray*}
Since $f$ is non-decreasing (i.e.,$f'(t)=\Phi(t) + t P_{-}(t) - t P_{-}(t) \geq 0$), 
the minimum is achieved at the left-most boundary $t=-\frac{\epsilon}{2}+\frac{\sum_i x_i^{+}+\sum_i x_i^{-}}{2n}$
which corresponds to $u_i=x_i^{-}-\epsilon,\;\;i=1,...,n$.
\\

{\bf Case 2 ($w<0$)}:  The upper-level cost function is now 
\[
f(t)=\int_{t}^{\infty} (-t+x)P_{-}(x)\;dx= -t(1-\Phi(t))+(1-\Psi(t)),
\]
which is non-increasing $(i.e., f'(t)=-(1-\Phi)+t P_{-}  - t P_{-}\leq 0)$ and has the constraints:
\[
-\frac{\epsilon}{2}+\frac{\sum_i x_i^{+}+\sum_i x_i^{-}}{2n} \leq t \leq \frac{\epsilon}{2}+\frac{\sum_i x_i^{+}+\sum_i x_i^{-}}{2n}.
\]
and 
\[
t = \frac{\sum_i u_i + \sum_i x_i^{+}}{2n} \geq \frac{\sum_i x_i^{+}}{n}. 
\]
For the solution to be feasible, it is required that
$\frac{\sum_i x_i^{+}}{n} \leq \frac{\epsilon}{2}+\frac{\sum_i x_i^{+}+\sum_i x_i^{-}}{2n}$, that is $\frac{\epsilon}{2} \geq \frac{\sum_i x_i^{+} - \sum_i x_i^{-}}{2n}$ (remember the assumption $\frac{\sum_i x_i^{-}}{n} \leq \frac{\sum_i x_i^{+}}{n}$). 
Therefore if the perturbation is large enough, i.e., $\epsilon \geq \frac{\sum_i x_i^{+} - \sum_i x_i^{-}}{n}$ holds, then the minimum is achieved at the right-most boundary $t=\frac{\epsilon}{2}+\frac{\sum_i x_i^{+}+\sum_i x_i^{-}}{2n}$
which corresponds to $u_i=x_i^{-} + \epsilon,\;\;i=1,...,n$.
\end{proof}
\if0
For simplicity, let $n=1$ (i.e. we have 1 point belonging to each class in our dataset), then $w^* = \frac{-2}{u - x_{+}}, b^*=\frac{u + x_{+}}{u-x_{+}}$. If $u = x_{-}$, we get the parameters of the classifier trained on clean data. Certified radius of the point $x$ under the clean classifier is $CR_{clean}(x) = |x-0.5 (x_{-}+x_{+})|$ and under the poisoned classifier is $CR_{poisoned}(x) = |x - 0.5 (u + x_{+})|$. Replacing $u = x_{-} + \delta$, we have $CR_{poisoned}(x) = |x - 0.5 (x_{-} + x_{+}) - 0.5 \delta|$. Thus, the maximum decrease in the certified radius for a correctly classified point with label -1 (like $x_{-}^{test}$ in the upper-level) is achieved when $\delta = -\epsilon$, implying the point $x_{-}$ is changed to $u = x_{-} + \delta = x_{-} - \epsilon$. However, this change may lead to misclassification of the point $x^{test}_{-}$, thus the optimal distortion for the point $x_{-}$ is $\delta = \min(-|x - 0.5 (x_{-} + x_{+})|, -\epsilon)$.

For $n>1$:
\fi

\begin{corollary}\label{cor:partial_poisoning}
If the perturbation is large enough, i.e., $\epsilon \geq \frac{\sum_i x_i^{+} - \sum_i x_i^{-}}{n}$ then the reduction in the ACR of the target class, by poisoning an $\alpha$ portion ($\alpha \in [0,1]$) of the target class, with maximum perturbation $\Tilde{\epsilon}$ using Eq. (\ref{eq:bilevel_linear}) is same as that achieved by poisoning the entire target class with $\epsilon = \alpha \Tilde{\epsilon}$.
\end{corollary}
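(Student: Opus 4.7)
The plan is to reduce the corollary to Theorem~\ref{thm:linear} by observing that only the sum $\sum_i u_i$ enters the closed-form solution of the lower-level problem, so two poisoning patterns that produce the same $\sum_i u_i$ produce the same classifier and hence the same ACR.

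First I would recall from the proof of Theorem~\ref{thm:linear} (Case 1, which is the unique global optimum under the corollary's perturbation assumption) that the optimal poisoning of all $n$ target points with radius $\varepsilon = \alpha\tilde\varepsilon$ sets $u_i = x_i^{-} - \alpha\tilde\varepsilon$ for every $i$, and the resulting lower-level threshold is
\[
t^\ast_{\text{full}} \;=\; \frac{\sum_i u_i + \sum_i x_i^{+}}{2n} \;=\; \frac{\sum_i x_i^{-} + \sum_i x_i^{+}}{2n} \;-\; \frac{\alpha\tilde\varepsilon}{2}.
\]

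Next, for the partial poisoning variant, I would set up the analogous bilevel problem in which only $m := \alpha n$ of the negative points (indexed by some set $S$) may be perturbed with $\|\delta_i\|\le\tilde\varepsilon$, and the remaining $n-m$ points stay fixed. The lower-level closed form still yields $t = \bigl(\sum_{i\in S} u_i + \sum_{i\notin S} x_i^{-} + \sum_i x_i^{+}\bigr)/(2n)$, and the upper-level objective $f(t) = t\Phi(t) - \Psi(t)$ is again non-decreasing in $t$ by the same computation $f'(t)=\Phi(t)\ge 0$. The feasibility interval for $t$ now has left endpoint
\[
t_{\min} \;=\; \frac{\sum_i x_i^{-} + \sum_i x_i^{+}}{2n} \;-\; \frac{m\tilde\varepsilon}{2n} \;=\; \frac{\sum_i x_i^{-}+\sum_i x_i^{+}}{2n} \;-\; \frac{\alpha\tilde\varepsilon}{2},
\]
achieved uniquely by $u_i = x_i^{-} - \tilde\varepsilon$ for $i\in S$. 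Thus $t^\ast_{\text{partial}} = t_{\min} = t^\ast_{\text{full}}$.

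Since both poisoning strategies yield the same threshold $t^\ast$, they yield the same (smoothed) linear classifier, hence the same certified radius at every test point of the negative class, and therefore the same ACR on that class. The only step that requires care is verifying that Case~1 of Theorem~\ref{thm:linear} remains the global optimum in the partial-poisoning formulation; this should follow verbatim from the original argument because the two feasibility restrictions ($w>0$ versus $w<0$) and the monotonicity of $f$ are unchanged, and the hypothesis $\varepsilon \ge (\sum_i x_i^{+}-\sum_i x_i^{-})/n$ carries over to ensure that the unique-optimum regime applies.
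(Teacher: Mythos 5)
Your core reduction is the same as the paper's: pass to the closed-form lower-level threshold, note that it depends on the poison points only through their sum, and observe that the extreme feasible thresholds for partial poisoning with radius $\tilde\epsilon$ on an $\alpha$ fraction coincide with those for full poisoning with radius $\alpha\tilde\epsilon$. That part is fine and matches the paper's computation of $t$.

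However, there is a concrete error in how you dispose of Case 2. You assert that Case 1 is ``the unique global optimum under the corollary's perturbation assumption,'' but Theorem~\ref{thm:linear} says the opposite: when $\epsilon \geq \frac{\sum_i x_i^{+} - \sum_i x_i^{-}}{n}$ (the corollary's hypothesis) there are \emph{two} locally optimal solutions, $u_i = x_i^{-} - \epsilon$ and $u_i = x_i^{-} + \epsilon$; the unique-global-optimum regime is the complementary ``otherwise'' case of small $\epsilon$. Your closing remark that the unique-optimum regime ``carries over'' is therefore backwards, and as written your argument only establishes the equivalence for the Case 1 ($w>0$) boundary. The paper's proof handles both cases: for the poisoned subset $S$ of size $\alpha n$, the right-endpoint solution $u_i = x_i^{-} + \tilde\epsilon$ for $i \in S$ yields $t = \frac{\alpha\tilde\epsilon}{2} + \frac{\sum_i x_i^{+} + \sum_i x_i^{-}}{2n}$, which is exactly the Case 2 boundary obtained by perturbing every negative point by $+\alpha\tilde\epsilon$. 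The fix is easy — run your same monotonicity-plus-endpoint argument for the $w<0$ branch, where the hypothesis $\alpha\tilde\epsilon \geq \frac{\sum_i x_i^{+} - \sum_i x_i^{-}}{n}$ is precisely what makes that branch feasible in both the partial and full formulations — but it must be done rather than dismissed by appeal to a uniqueness claim the theorem does not make.
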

\begin{proof}
To obtain the effect of poisoning an $\alpha$ portion ($\alpha \in [0,1]$) of the target class, with maximum perturbation $\Tilde{\epsilon}$, we follow the proof of Theorem~\ref{thm:linear}, with the change that the solution to the lower-level problem is $t = -\frac{b}{w}=\frac{\sum_{i=0}^{\alpha n} u_i + \sum_{i=0}^{(1-\alpha) n} x_i^{-} + \sum_i x_i^{+}}{2n}$. \\
\\
Thus the solutions to the upper-level problem are \\ 
$t = \frac{-\alpha \Tilde{\epsilon}}{2} + \frac{\sum_i x_i^{+} + \sum_{i=0}^{\alpha n} x_i^{-} + \sum_{i=0}^{(1-\alpha) n} x_i^{-}}{2n}$
which corresponds to $u_i = x_i^{-} - \Tilde{\epsilon},\;\;i=1,...,\alpha n$ for Case 1 and \\\\
$t = \frac{\alpha \Tilde{\epsilon}}{2} + \frac{\sum_i x_i^{+} + \sum_{i=0}^{\alpha n} x_i^{-} + \sum_{i=0}^{(1-\alpha) n} x_i^{-}}{2n}$
which corresponds to $u_i = x_i^{-} + \Tilde{\epsilon},\;\;i=1,...,\alpha n$ for Case 2.
\\\\
The decision boundaries \\\\
$t = \frac{-\alpha \Tilde{\epsilon}}{2} + \frac{\sum_i x_i^{+} + \sum_{i} x_i^{-}}{2n}$
for Case 1 and \\\\
$t = \frac{\alpha \Tilde{\epsilon}}{2} + \frac{\sum_i x_i^{+} + \sum_{i} x_i^{-}}{2n}$ for Case 2 are the same boundaries as obtained by poisoning all points from the target class ($x_i^{-})$ with $\epsilon = \alpha \Tilde{\epsilon}$

\end{proof}

\section{Review of bilevel optimization}\label{app:approxgrad}
A bilevel optimization problem is of the form $\min_{u \in \mathcal{U}} \xi(u,v^*)\;\mathrm{s.t.}\;v^* = \arg\min_{v\in \mathcal{V}(u)}\;\zeta(u,v)$, where the upper-level problem is a minimization problem with $v$ constrained to be the optimal solution to the lower-level problem. General bilevel problems are difficult to solve but if the solution to the lower-level problem can be computed in closed form then we can replace the lower-level problem with its solution, reducing the bilevel problem into a single level problem. We can then use the gradient-based methods to solve the single level problem. The total derivative $\frac{d\xi}{du}(u, v^*(u))$ (hypergradient) using the chain rule is 
\[\frac{d\xi}{du} = \nabla_u \xi + \frac{dv}{du}\cdot\nabla_v \xi.\]
Since ${\nabla_v \zeta=0}$ at ${v=v^*(u)}$ and assuming $\nabla^2_{vv} \zeta$ is invertible we can compute ${\frac{dv}{du}}$ using the implicit function theorem (this can be done even if the solution to lower-level problem can't be found in closed form) which gives \[{\frac{dv}{du} = -\nabla^2_{uv} \zeta (\nabla^2_{vv} \zeta)^{-1}}.\] Thus the hypergradient is
\[\frac{d\xi}{du} = \nabla_u \xi -\nabla^2_{uv} \zeta (\nabla^2_{vv} \zeta)^{-1} \nabla_v \xi \;\mathrm{at} \;(u,v^*(u)).\] Since computation of $(\nabla^2_{vv} \zeta)^{-1}$ is difficult, \cite{domke2012generic,pedregosa2016hyperparameter} proposed to instead approximate the solution to $q = (\nabla^2_{vv} \zeta)^{-1} \nabla_v \xi$ by approximately solving the linear system of equations $\nabla^2_{vv}\cdot q \approx \nabla_v \xi$. This can be done by minimizing $\|\nabla^2_{vv} \zeta\cdot q - \nabla_v \xi\|$ using any iterative solver. Other methods for solving the bilevel optimization problems include using forward/reverse mode differentiation \cite{franceschi2017forward,maclaurin2015gradient,shaban2018truncated} to approximate the inverse and penalty method \cite{mehra2019penalty} to solve the single level problem as a constrained minimization problem. 
\begin{algorithm}
\caption{Algorithm for ApproxGrad} 
\label{alg:approxgrad}
{Input}: $\xi, \zeta, M, T_1, T_2, \epsilon, u_{base}, \{\tau_m=0.1\}, \\ \{\rho_{m,t_1}=0.001\},\{\beta_{m,t_2}=0.001\}$ \\
{Output}: $(u_{K})$\\
Initialize $u_0,v_0$ randomly\\
{Begin}
\begin{algorithmic}
\FOR{$m=0,\;\cdots\;,M\textrm{-}1$}
    \STATE{}
    \STATE{\# Approximately solve the lower-level problem }
	\FOR{$t=0,\cdots,T_1\textrm{-}1$}
        \STATE{$v_{t+1} \leftarrow v_{t} - \rho_{m,t_1} \nabla_v \zeta$}
    \ENDFOR
    \STATE{}
    \STATE{\# Approximately solve the linear system}
    \STATE{\# $\nabla^2_{vv} \zeta\cdot q_k = \nabla_v \xi$}
    \FOR{$t=0,\cdots,T_2\textrm{-}1$}
        \STATE{$q_{t+1} \leftarrow q_{t} - \beta_{m,t_2}\nabla_{q}(\|\nabla^2_{vv} \zeta\cdot q_m - \nabla_v \xi\|)$}
    \ENDFOR
    \STATE{}
    \STATE{\# Compute the approximate Hypergradient}
    \STATE{$p_m = \nabla_u \xi - \nabla^2_{uv}\zeta \cdot q_{T_{2}}$}
    
    \STATE{}
    \STATE{\# Update $u_m$ and use projection for the}
    \STATE{\# upper-level constraint}
    \STATE{$u_{m+1} = P(u_{m} - \tau_m p_m, \epsilon, u_{base})$}
    \STATE{}
\ENDFOR
\end{algorithmic}
\end{algorithm}

\section{Attack algorithm}\label{app:algorithms}
Alg.~\ref{alg:main} shows the complete algorithm used to generate the poisoning attack when RS is used for certification and models are trained using GA. The algorithm relies on ApproxGrad (Alg.~\ref{alg:approxgrad}) to solve the bilevel optimization problem. The upper-level cost is a differentiable function that approximates the certified radius of the hard smooth classifier using a soft smooth classifier. The hyperparameter $\alpha$ is the inverse temperature parameter of softmax. As $\alpha \rightarrow \infty$, softmax converges to argmax almost everywhere. As a result $\tilde{g}_{\theta}$ converges to $g_{\theta}$ almost everywhere and thus soft randomized smoothing converges to hard randomized smoothing almost everywhere. Although, in this work we considered RS as the procedure for certification (due to its scalability to large models and datasets), any other certification procedure can be used as the upper-level cost as long as its differentiable. Moreover, Alg.~\ref{alg:main} uses $\mathcal{L}_{\mathrm{GaussAug}}$ in the lower-level to train the model, but like the case with upper-level cost any other loss function can be used to obtain the model parameters. This flexibility of our method allows us to generate poison data against MACER and SmoothAdv using their loss functions in the lower-level.

\subsection{ApproxGrad}
For an unconstrained bilevel problem of the form $\min_{u} \xi(u,v^*)\;\mathrm{s.t.}\;v^* = \arg\min_{v}\;\zeta(u,v)$, if $\zeta(u,v)$ is strongly convex then we can replace the lower-level problem with its necessary condition for optimality and write the bilevel problem as the following single level problem $\min_{u} \xi(u,v^*)\;\mathrm{s.t.}\;\nabla_v \zeta(u,v) = 0$. Assuming $\nabla^2_{vv} \zeta$ is invertible everywhere we can compute the hypergradient at the point $(u, v^*(u))$ as $\frac{d\xi}{du} = \nabla_u \xi -\nabla^2_{uv} \zeta (\nabla^2_{vv} \zeta)^{-1} \nabla_v \xi$.

The ApproxGrad algorithm approximates the Hessian-inverse vector product by approximately solving a system of linear equation using an iterative solver such as gradient descent or conjugate gradient method. In this work we use Adam optimizer to solve this system. Since our problem for data poisoning in Eq.~(\ref{Eq:Bilevel}) involves a constraint in the upper-level we use projection to enforce the constraint. The full algorithm for solving the bilevel optimization problem using ApproxGrad is present in Alg.~\ref{alg:approxgrad}. For our attack the lower-level problem involves a deep neural network, which can have multiple local minima and thus optimizing against a single local minima in the bilevel problem is not ideal. To overcome this problem we reinitialize the lower-level variable $v$ after few upper-level iterations to prevent the poisoning points from overfitting to a particular local minima. Empirically, this helps us find poisoning points that remain effective even after the model is retrained from scratch making them generalize to different initialization of the neural network. 

\begin{figure*}
\small
\centering
\subfigure[Clean data (odd numbered rows) and poisoned data generated by our attack (even numbered rows) for all digits in MNIST]
{
\includegraphics[width=0.98\columnwidth,height=15cm]{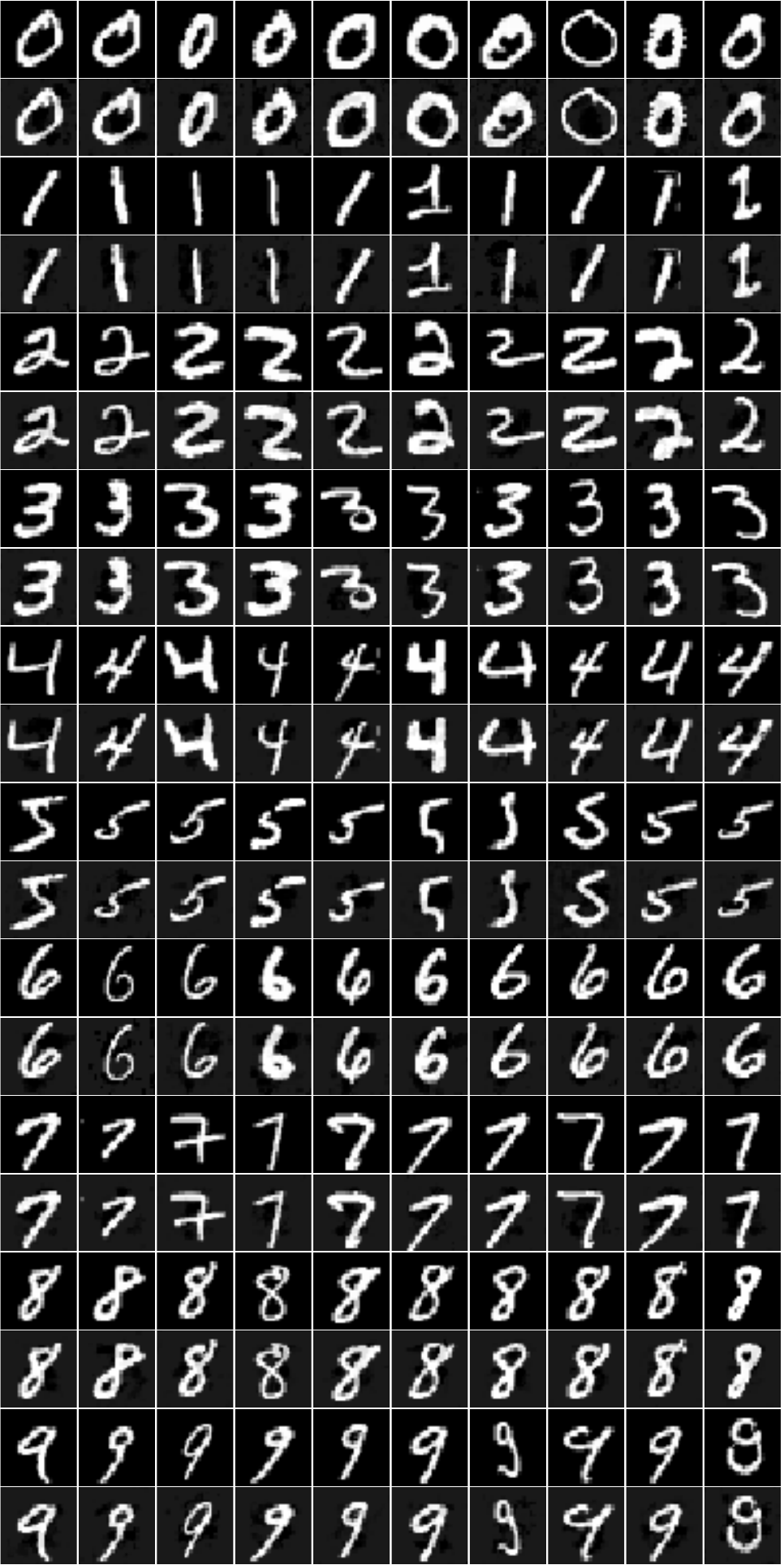}
}\hfill
\subfigure[Clean data (odd numbered rows) and poisoned data generated by our attack (even numbered rows) for all classes of CIFAR10]
{
\includegraphics[width=0.98\columnwidth,height=15cm]{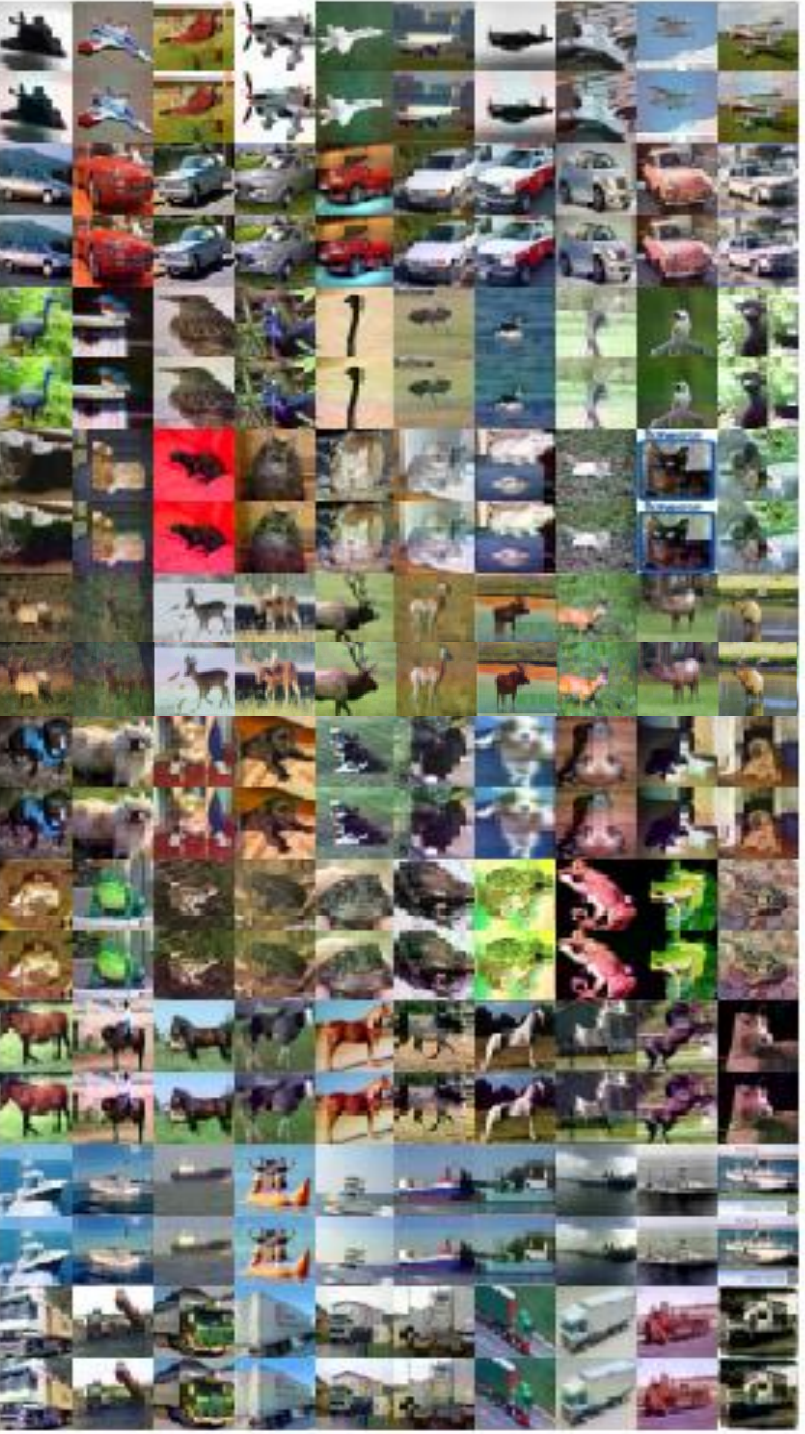}
}
\caption{Imperceptibly distorted poison data generated by our algorithm against Gaussian augmented training which causes a significant reduction in the certified robustness guarantees of the models. The average certified radius and certified accuracy of models trained on clean and poisoned data are reported in App.~\ref{app:all_classes} and Fig.~\ref{fig:all_classes}.}
\label{fig:attack_egs}
\end{figure*}

\begin{figure*}[tb]
  \centering{
  \subfigure[Average certified radius of all digits in MNIST]{\includegraphics[width=\columnwidth]{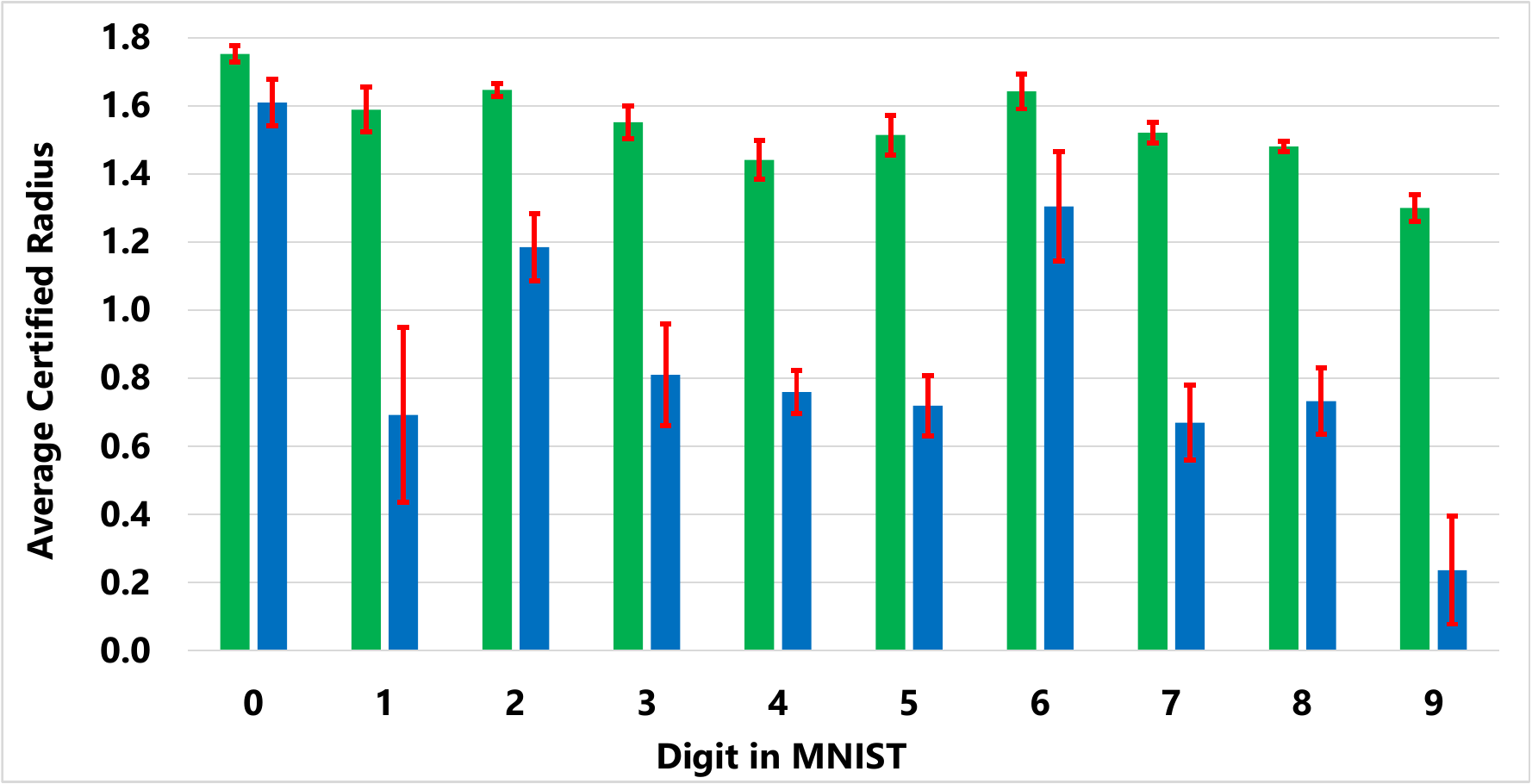}}
  \hspace{0.15in}
  \subfigure[Approximate certified accuracy of all digits in MNIST]{\includegraphics[width=\columnwidth]{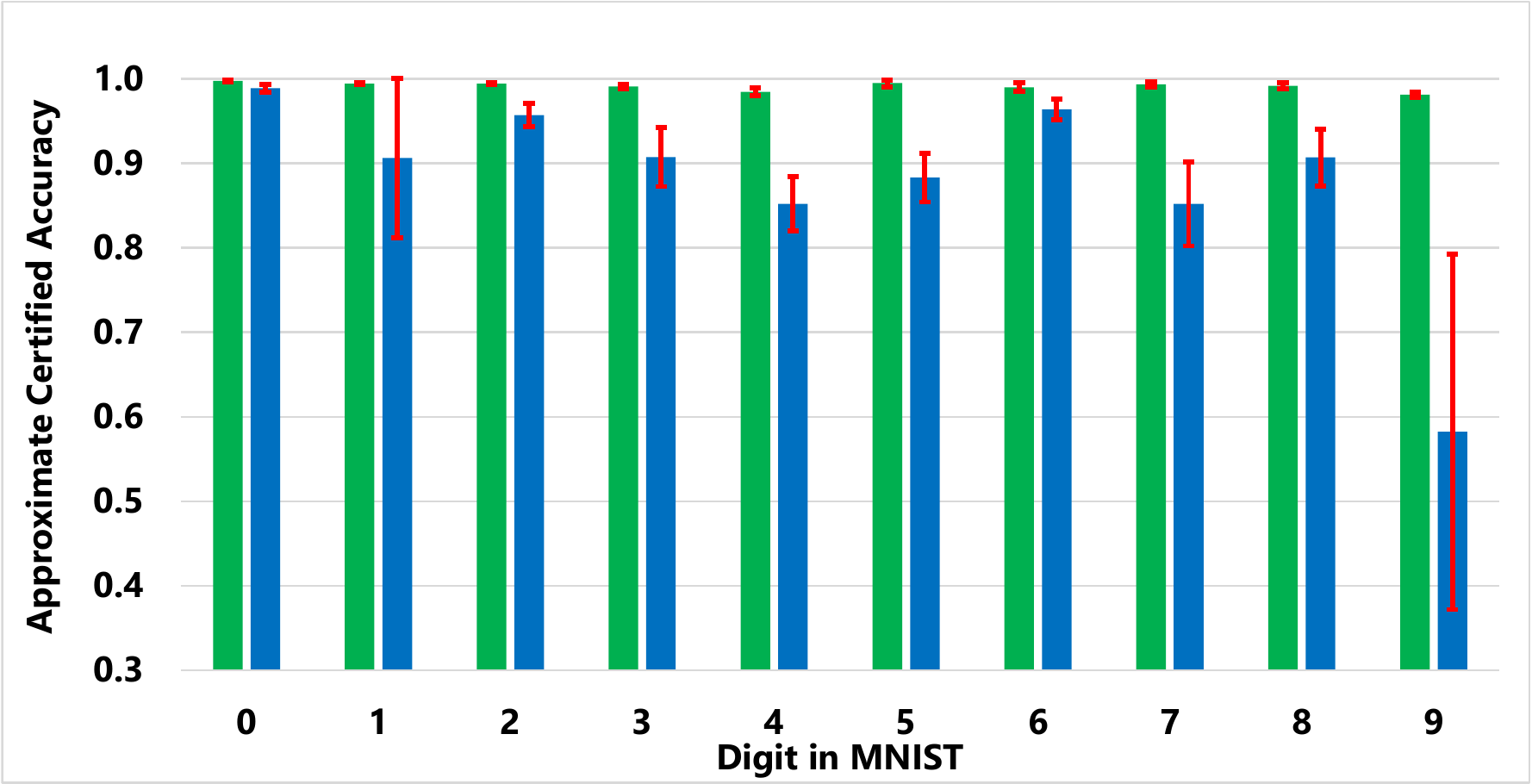}}
  \subfigure[Average certified radius of all classes in CIFAR10]{\includegraphics[width=\columnwidth]{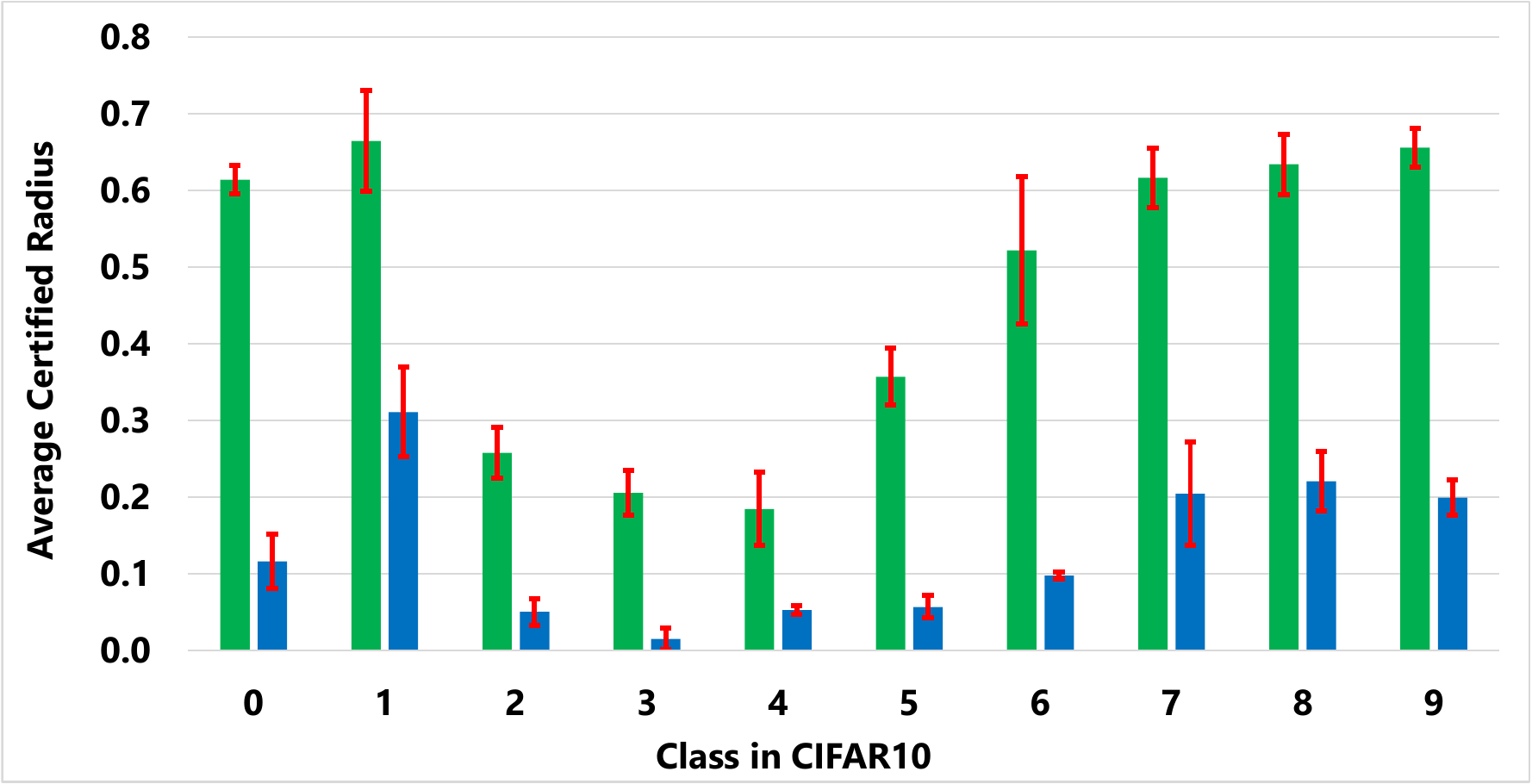}}
  \hspace{0.15in}
  \subfigure[Approximate certified accuracy of all classes in CIFAR10]{\includegraphics[width=\columnwidth]{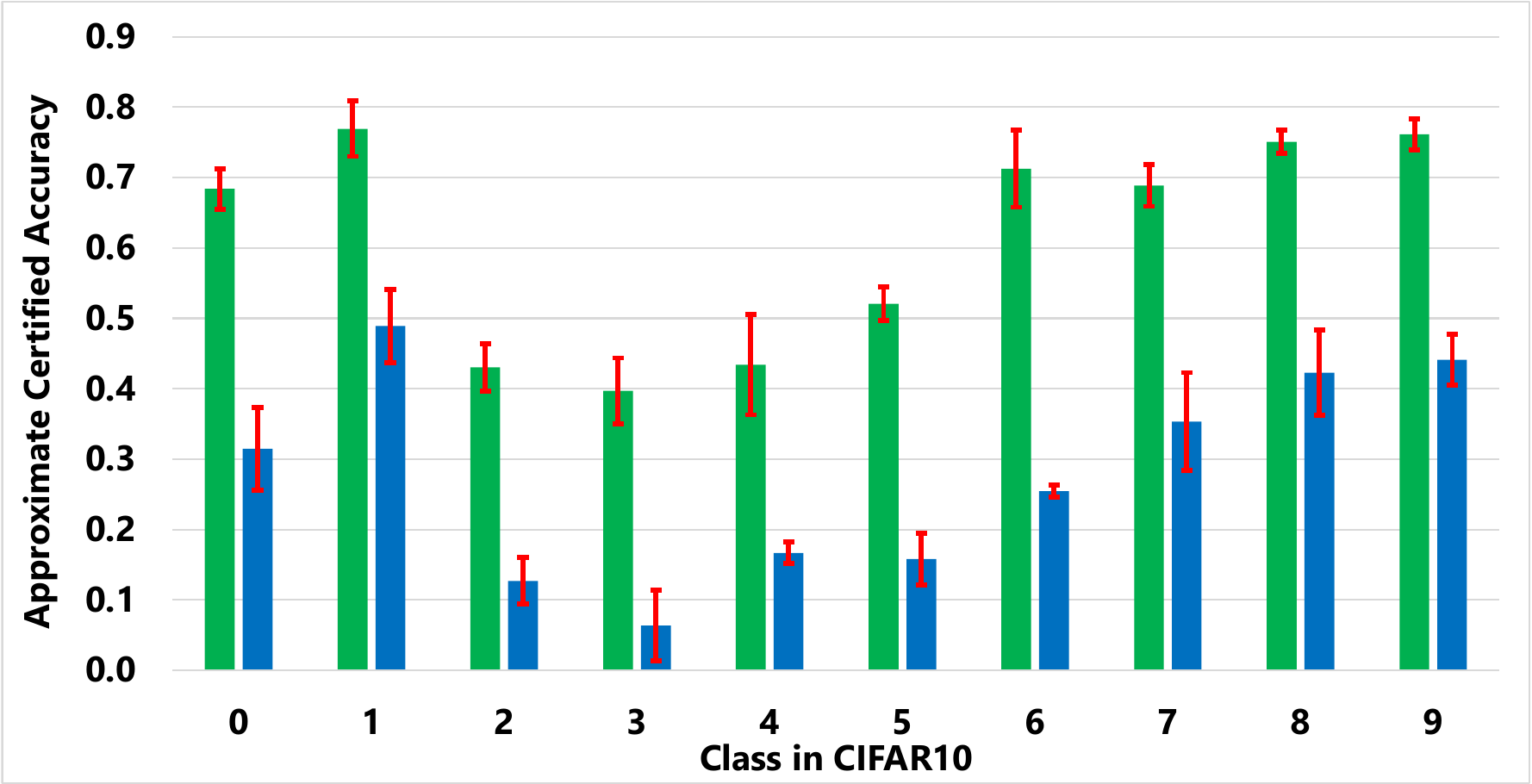}}
  \includegraphics[width=0.5\columnwidth]{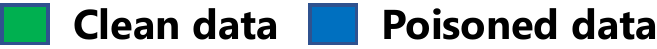}
  }
  \caption{Successful poisoning attack against all classes in MNIST and CIFAR10 dataset.
  }
  \label{fig:all_classes}
\end{figure*}
 
\section{Additional experiments}\label{app:additional_experiments}
\subsection{Comparison with standard data poisoning}\label{app:standard_acc}
The standard data poisoning attack creates poison data so that the accuracy of the victim's model trained on it is significantly lower than the accuracy attainable with training on clean data. The bilevel optimization problem for this attack is as follows.
\begin{equation}
    \begin{split}
        &\min_{u}\;\; \mathcal{L}_\mathrm{standard}(\mathcal{D}^\mathrm{val}) \\
        &\mathrm{s.t.}\;\; \|\delta_i\|_{\infty} \leq \epsilon,\;\;  i=1,...,n,\;\;\mathrm{and} \\ 
        \theta^* = \arg&\min_{\theta}\; \mathcal{L}_\mathrm{standard}(\mathcal{D^\mathrm{clean}} \bigcup \mathcal{D^\mathrm{poison}}; \theta).
    \end{split}
    \label{eq:bilevel_poison_accuracy}
\end{equation}

Here $\mathcal{L}_\mathrm{standard}(\mathcal{D};\theta) = \frac{1}{|\mathcal{D}|}\sum_{(x_i,y_i) \in \mathcal{D}} l_{ce}(x_i,y_i;\theta)$, where $l_{ce}$ is the cross entropy loss. We used this formulation to generate the poisoned dataset for reporting the results in Table~\ref{Table:difficulty_of_poisoning} with $\epsilon = 0.1$ for MNIST and  $\epsilon = 0.03$ for CIFAR10. The attack modifies all the points in the target classes. Specifically, our attack targets misclassification of the digit 8 in MNIST and class ``Ship'' in CIFAR10. The poisoned dataset obtained after solving the bilevel optimization was then used to train five models starting from random initializations with different training procedures. The results of which are reported in Table~\ref{Table:difficulty_of_poisoning}. As expected the models trained with standard training on the poisoned data perform the worst in terms of accuracy since the attack was optimized against standard training. However, the generated poison data has little to no effect when a training procedures that improves certified adversarial robustness is used. This shows that the effect of standard data poisoning can easily be nullified if a victim trains the model with a these training procedures. This gives a false sense of security of the models trained with certified defenses to data poisoning attacks. Thus, in this work we study the effect of poisoning on training procedures meant to improve certified adversarial robustness and show that their guarantees become meaningless when the dataset is poisoned.

\subsection{Isotropic Gaussians}\label{app:isotropic_gaussian}
Here we validate the solution found by solving the bilevel optimization against the analytical solution of a toy problem. Consider a two-dimensional dataset comprising of points drawn from two isotropic Gaussian distributions. Let $\mathbb{P}(x|y=-1)) = \mathcal{N}(\mu_1, \sigma^2I)$ and $\mathbb{P}(x|y=1) = \mathcal{N}(\mu_2, \sigma^2I)$ and equal prior $\mathbb{P}(y=1) = \mathbb{P}(y=-1)$. For a point $x$, the Bayes optimal classifier predicts $y=1$ if $\mathbb{P}(y = 1|x) >= \mathbb{P}(y = -1|x)$ and predicts $y=-1$ otherwise. The decision boundary of the Bayes optimal classifier is given by $(\bf{x} - {\mu_1})^T(\bf{x} - {\mu_1}) = (\bf{x} - {\mu_2})^T(\bf{x} - {\mu_2})$. This is also the decision boundary of the smoothed classifier. Assuming the attacker is poisoning the class with label $-1$ and maximum permissible distortion is $\epsilon$, our analysis showed that maximum reduction in radius occurs if the entire distribution shifts by $\epsilon$ i.e. the new mean of the class with label -1 is $\mu_1 - \epsilon$ and the decision boundary is $(\bf{x} - (\mu_1 - \epsilon))^T(\bf{x} - (\mu_1 - \epsilon)) = (\bf{x} - {\mu_2})^T(\bf{x} - {\mu_2})$. Since the test distribution is unchanged, the ACR for the test points with labels -1 is reduced by $\frac{\epsilon}{\sqrt{2}}$. Using $\mu_1 = 0.2, \mu_2 = 0.8, \sigma_1 = \sigma_2 = 0.3, \epsilon = 0.1$ and using logistic regression in the lower-level, analytically the certified radius must decrease from $0.4243$ to $0.3546$. The solution by solving the bilevel optimization numerically (Table~\ref{Table:isotropic_gaussian}) matches the analytic solution. 
\begin{table}[tb]
  \caption{Degradation of certified adversarial robustness of logistic regression trained with GA on a toy 2-isotropic Gaussians dataset.
  }
  \label{Table:isotropic_gaussian}
  \centering
  \small
  \resizebox{0.85\columnwidth}{!}{
    \begin{tabular}{c|cc|cc}
    \toprule
    \multirow{2}{*}{$\sigma$} & \multicolumn{2}{c|}{\makecell{Certified Robustness on \\ clean data}} &  \multicolumn{2}{c}{\makecell{Certified Robustness on \\ poisoned data}} \\
    & ACR & ACA(\%) &  ACR & ACA(\%)  \\
    \midrule
    0.25 & 0.4047 & 90.00 & 0.3585 & 88.00 \\
    0.50 & 0.4139 & 90.00 & 0.3587 & 87.60 \\
    0.75 & 0.4123 & 90.00 & 0.3544 & 87.60 \\
    \bottomrule
    \end{tabular}
    }
\end{table}

\begin{figure*}[tb]
  \centering{
  \subfigure[Average certified radius of class ``Ship'' in CIFAR10]{\includegraphics[width=0.85\columnwidth]{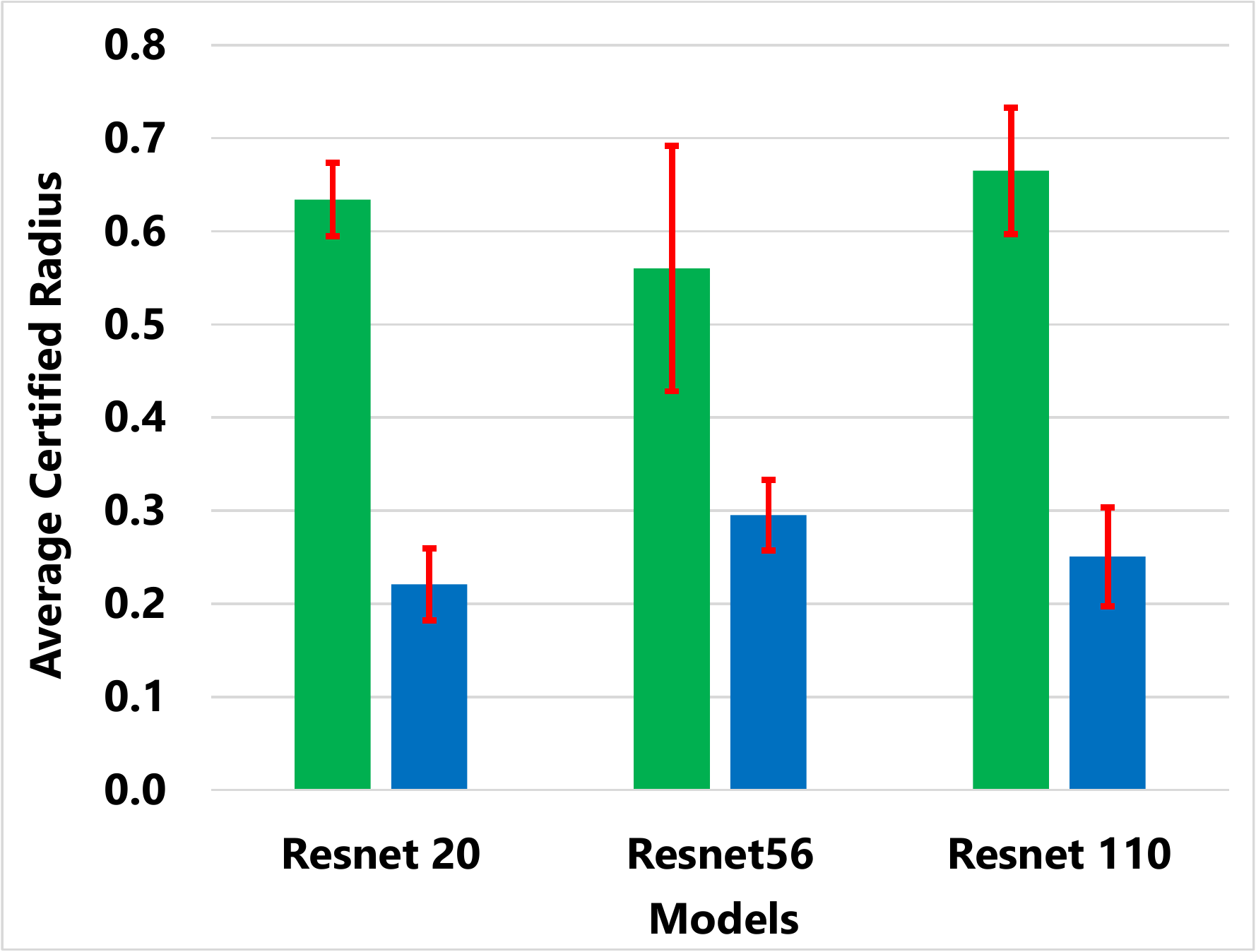}}
  \hspace{0.15in}
  \subfigure[Approximate certified accuracy of class ``Ship'' in CIFAR10]{\includegraphics[width=0.85\columnwidth]{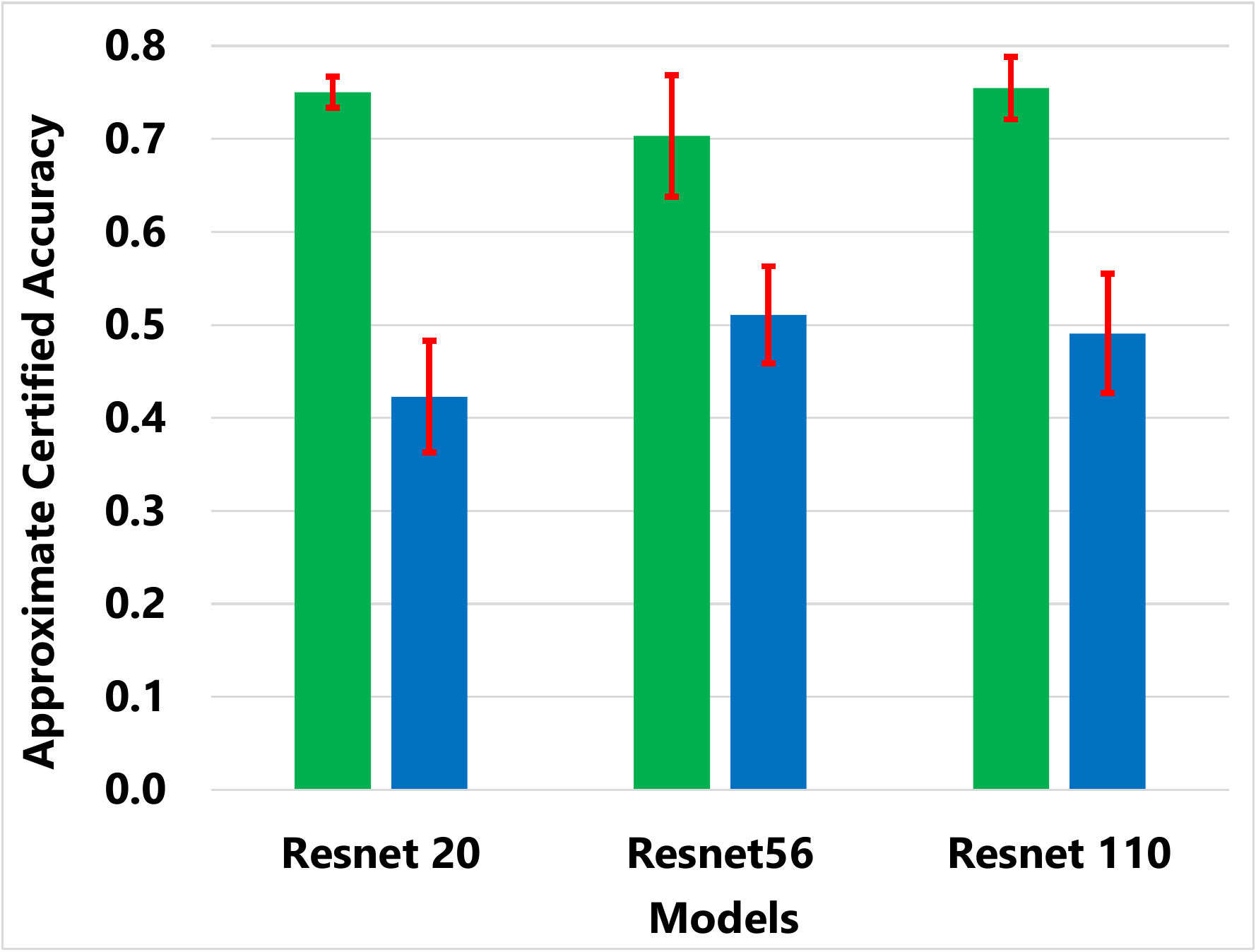}}
  \includegraphics[width=0.5\columnwidth]{images/legend_2_c.pdf}
  }
  \caption{Successful transferability of poisoning attack against deeper models. The poison data is optimized against Resnet-20. GA with $\sigma = 0.5$ is used for poison generation and evaluation.
  }
  \label{fig:transferability_arch}
\end{figure*}

\subsection{Targeting other classes}\label{app:all_classes}
In this section we present the results of our poisoning attack on different target classes where the models are trained using GA during poison generation and evaluation. Since MNIST and CIFAR10 both have 10 classes we create 10 poisoning sets each targeting a particular class. The results of retraining models from five random initializations on each of the 10 poisoning sets are summarized in Fig.~\ref{fig:all_classes}. Reduction in average certified radius for all classes shows that an attacker can generate poison data to affect any class in the dataset.

\subsection{Transferability to different architectures}\label{app:transfer_architecture}
Here we present the results of transferability of the poisoned data generated against Resnet-20 targeting the class ``Ship'' to bigger models. In particular we present the results on Resnet-56 and Resnet-110 \cite{He_2016_CVPR} models in Fig.~\ref{fig:transferability_arch}. As seen from the results the poisoned data generated against Resnet-20 is successful in reducing the certified radius of the target class even if the victim uses a larger model. We report the results of training the models on clean and poisoned data starting from three random initializations and certify using 500 randomly sampled points of the target class from the clean test set. Our results suggest that the poisoned data generated using our procedure are agnostic to the training procedure (Fig.~\ref{fig:transfer}), model (Fig.~\ref{fig:transferability_arch}) and metric (RS or empirical robustness) used by the victim during evaluation highlighting the threat of data poisoning. 

\subsection{Effect of weight regularization}\label{app:weight_reg}
Previous works \cite{carnerero2020regularisation} have shown weight regularization to mitigate the effect of data poisoning attacks. Here, we evaluate the attack success when using different coefficients for weight regularization in standard and GA training.  
Results in Table~\ref{Table:effect_of_weight_regularization} show that our attack significantly reduces the ACR of models, especially those trained without GA or weight regularization.
Similar to previous works \cite{carnerero2020regularisation}, we see that models trained with large regularization (without GA) are difficult to poison. This increased robustness, however comes at the cost of accuracy (target class accuracy of models trained with clean data drops from 99\% to 92\%), suggesting a trade-off. On the other hand, for models trained with GA, using large regularization leads to a significant drop in their certified robustness guarantees, even without poisoning (ACR for models trained on clean data drops from 1.48 to 0.85), there by making large regularization undesirable to use with GA. 
This shows that our attack remains quite effective even when different amounts of weight regularization are used during model retraining.  
\vspace{-0.15cm}
\begin{table}[h]
    \caption{
    Effect of weight regularization and Gaussian data augmentation ($\sigma$=0.5) on ACR of digit 8 of MNIST ($\epsilon$=0.1). Mean and s.d. of 3 random initializations. Bold entries are reported in Table \ref{Table:cohen_attack}.}
  \label{Table:effect_of_weight_regularization}
  \centering
  \small
  \resizebox{0.95\columnwidth}{!}{
    \begin{tabular}{c|c|c|c|c|c}
    \toprule
   Training & Data & No Reg. & 1E-4 & 1E-2 & 1E-1\\
    \midrule
    \multirow{2}{*}{\makecell{Without \\ GA}} & Clean & 0.95$\pm$0.10 & 0.89$\pm$0.06 & 0.87$\pm$0.11 & 0.82$\pm$0.04  \\
    &Poisoned & 0.01$\pm$0.01 & 0.03$\pm$0.05 & 0.37$\pm$0.06 & 0.68$\pm$0.05 	\\
    \midrule
    \multirow{2}{*}{\makecell{With \\ GA}}&Clean & {\bf1.48$\pm$0.02} & 1.49$\pm$0.03& 1.29$\pm$0.15 & 0.85$\pm$0.09 \\
    &Poisoned & {\bf0.73$\pm$0.10} & 0.62$\pm$0.02 & 0.58$\pm$0.11& 0.72$\pm$0.08\\
    \bottomrule
    \end{tabular}
    }
\end{table}

\subsection{Attack success by poisoning 1\% of the data}\label{app:partial_poisoning}
Similar to \cite{huang2020metapoison}, where the effect of poisoning 1\% of the training data is evaluated on a single test point, we randomly select 5 test images of the bird class and generate poison data to reduce their certified radius individually. Using Alg.~\ref{alg:main}, we poison 500 bird images (1\% of CIFAR10) closest to the target, with $\epsilon$=0.06. Using 3 randomly initialized models trained with GA, our attack can reduce the certified radius of 5 targets from 0.63 to 0.26 on average. 

\section{Details of the experiments}\label{app:experiments_details}
All codes are written in Python using Tensorflow/Keras, and were run on Intel Xeon(R) W-2123 CPU with 64 GB of RAM and dual NVIDIA TITAN RTX. Implementation and hyperparameters are described  below.

\subsection{Data splits}
For MNIST, we use 55000 points as the training data and 5000 points for validation data. We have roughly 500 points belonging to the target class in the validation set which is used in the upper-level problem of the bilevel optimization presented in Eq.~(\ref{eq:bilevel_simple}). For CIFAR10, we use 45000 points as the training data and 5000 points for validation data. Similar to MNIST we have roughly 500 points belonging to the target class in the validation set. The test sets of both the datasets comprises of 10000 points. We use 500 randomly sampled points of the target class from the test set to report the results of certified and empirical robustness of the models trained on clean and poisoned data. 

\subsection{Model Architecture}
For the experiments on the MNIST dataset, our network consists of a convolution layer with kernel size of 5x5, 20 filters and ReLU activation, followed by a max pooling layer of size 2x2. This is followed by another convolution layer with 5x5 kernel, 50 filters and ReLU activation followed by similar max pooling and dropout layers. Then we have a fully connected layers with ReLU activation of size 500. Lastly, we have a softmax layer with 10 classes. The accuracy of the model on clean data when optimized with the Adam optimizer using a learning rate of 0.001 for 100 epochs with batch size of 200 is 99.3\% (without GA). 
For the experiments on the CIFAR10 dataset, we use the Resnet-20 model. The accuracy of the model on clean data when optimized with the Adam optimizer using a learning rate of 0.001 for 100 epochs with batch size of 200 is 85\% (without GA). For all CIFAR10 experiments except for the experiments with SmoothAdv, we trained the models using data augmentation (random flipping and random cropping).
We used the same parameters for training the models with different robust training procedures on clean and poisoned data.

\subsection{Hyperparameters}
For experiments with MNIST we used $\epsilon = 0.1, K = 20, \alpha = 16$. The batch size used for lower-level training was 1000, of which 100 points belonged to the poisoned set (target class). The batch size for validation set was 100 which only consisted of points from the target class. The lower-level was trained using different training procedures on clean and poisoned data. For experiments with CIFAR10 we used $\epsilon = 0.03, \lambda = 0.06, M = 20, \alpha = 16$. The batch size used for lower-level training was 200, of which 20 points belonged to the poisoned set (target class). The batch size for validation set was 20 which only consisted of points from the target class. For training with GA the lower-level is trained with a single noisy image of the clean and poisoned dataset. The same setting is used while retraining. For generating poison data against MACER the lower-level is trained with $K=2, \lambda=1, \gamma = 8$. During retraining, $K=16, \lambda=16, \gamma=8$ are used for MNIST. For CIFAR10, we use $K=16, \gamma=8$ and $\lambda = 12$ for $\sigma=0.25$ and $\lambda=4$ for $\sigma=0.5$. The hyperparameters during retraining are similar to the ones used in the original work. For Smoothadv, $k=1$ and 2-step PGD attack are used to generate adversarial examples of the smooth classifier. These adversarial examples along with GA are used to do adversarial training during poison generation and retraining. 

In our experiments with generating poisoned data against GA and MACER we used $P = 50, T_1 = T_2 = 10, \tau = 0.1, \rho = 0.001, \beta = 0.01$ for ApproxGrad. We used all the same hyperparameters for SmoothAdv except $T_1 = 1$. For certification we used the CERTIFY procedure of \cite{cohen2019certified}, with $n_0 = 100, n=100000, \alpha = 0.001$. For measuring empirical robustness of the smoothed classifier, we used the mean $\ell_2$ distortion required by PGD attack to generate an adversarial example as done in \cite{salman2019provably}. The attack is optimized for 100 iterations for different values of $\ell_{2}$ distortion between (0.01, 10). We used 20 augmentations for each test point of MNIST and 10 for CIFAR10. To report the results for empirical robustness we record the minimum distortion for a successful attack for each test point.

For the watermarking baseline, we randomly selected an image ($other$) from the classes other than the target class and over-layed them on top of the target class images ($base$) with an opacity of $\gamma = 0.1$ i.e. ($poison\_image =  \gamma \cdot other + (1 - \gamma) \cdot base$). We then clip the images to have $\ell_{\infty}$ distortion of $\epsilon$ to make our bilevel attack comparable in terms of maximum distortion.

\end{document}